%% file: neurips_2025.tex
\documentclass{article}

\usepackage[preprint]{neurips_2025}

\usepackage{natbib} 
\usepackage[utf8]{inputenc} 
\usepackage[T1]{fontenc}    
\usepackage{hyperref}       
\usepackage{url}            
\usepackage{booktabs}       
\usepackage{amsfonts}       
\usepackage{nicefrac}       
\usepackage{microtype}      
\usepackage{xcolor}  
\usepackage{algorithm} 
\usepackage{algorithmic} 
\usepackage{enumitem} 

\title{Learning Unknown Interdependencies for Decentralized Root Cause Analysis in
Nonlinear Dynamical Systems}

\author{
  Ayush Mohanty\thanks{School of Industrial and Systems Engineering, Georgia Institute of Technology, Atlanta, GA 30332}
  \And
  Paritosh Ramanan\thanks{School of Industrial Engineering and Management, Oklahoma State University, Stillwater, OK 74708}
  \And
  Nagi Gebraeel\footnotemark[1]
}

\usepackage{amsmath}
\usepackage{amssymb}
\usepackage{mathtools}
\usepackage{amsthm}
\usepackage{subfiles}
\usepackage{subcaption}
\usepackage{booktabs}
\theoremstyle{plain}
\newtheorem{theorem}{Theorem}[section]
\newtheorem{proposition}[theorem]{Proposition}
\newtheorem{lemma}[theorem]{Lemma}

\theoremstyle{definition}
\newtheorem{definition}[theorem]{Definition}
\newtheorem{assumption}[theorem]{Assumption}
\newtheorem{remark}[theorem]{Remark}

\usepackage[textsize=tiny]{todonotes}
\usepackage[toc,page,header]{appendix}
\usepackage{minitoc}
\begin{document}

\maketitle

\begin{abstract}

\subfile{Abstract.tex}
\end{abstract}
\section{Introduction}\label{sec:Intro}
\subfile{Introduction.tex}
\section{Related Work}\label{sec:RelatedWork}
\subfile{RelatedWork.tex}
\section{Problem Setting}\label{sec:ProblemSetting}
\subfile{ProblemSetting.tex}
\section{Training: Learning Cross-Client Interdependencies}\label{sec:training}
\subfile{Training.tex}
\section{Inference: Root Cause Analysis}\label{section:inferencing}
\subfile{Inference.tex}

\section{Convergence Analysis}\label{sec:Convergence}
\subfile{Convergence.tex}
\section{Privacy Analysis}\label{sec:privacy}
\subfile{PrivacyAnalysis.tex}
\section{Experiments}\label{sec:Experiments}
\subfile{Experiments.tex}
\section{Conclusion and Limitations}\label{sec:conclusionAndlimitations}
\subfile{ConclusionANDLimitations.tex}




\clearpage
\bibliography{reference}






\newpage
\appendices
\tableofcontents
\newpage
\addtocontents{toc}{\protect\setcounter{tocdepth}{3}}
\subfile{Appendix/Appendix}
\end{document}

%% file: Abstract.tex
Root cause analysis (RCA) in networked industrial systems, such as supply chains and power networks, is notoriously difficult due to unknown and dynamically evolving interdependencies among geographically distributed clients. These clients represent heterogeneous physical processes and industrial assets equipped with sensors that generate large volumes of \textit{nonlinear}, high-dimensional, and heterogeneous IoT data. Classical RCA methods require partial or full knowledge of the system’s dependency graph, which is rarely available in these complex networks. While federated learning (FL) offers a natural framework for decentralized settings, most existing FL methods assume homogeneous feature spaces and retrainable client models.  These assumptions are not compatible with our problem setting.  Different clients have different data features and often run fixed, proprietary models that cannot be modified. This paper presents a federated cross-client interdependency learning methodology for feature-partitioned, nonlinear time-series data, without requiring access to raw sensor streams or modifying proprietary client models. Each proprietary local client model is augmented with a Machine Learning (ML) model that encodes cross-client interdependencies. These ML models are coordinated via a global server that enforces representation consistency while preserving privacy through calibrated differential privacy noise. RCA is performed using model residuals and anomaly flags. We establish theoretical convergence guarantees and validate our approach on extensive simulations and a real-world industrial cybersecurity dataset.

%% file: Introduction.tex
Root-cause analysis (RCA) is a fundamental diagnostic method used to identify the underlying causes of system-level failures. RCA becomes significantly challenging in modern networked systems that are characterized by complex, and \textbf{often unknown}, interdependencies among geographically dispersed industrial assets  (hereafter referred to as clients), such as smart cities \cite{taoLocatingCompromisedData2018}, power networks \cite{falahatiSmartGrid2012, caiCascadingFailureAnalysis2016}, and supply chains \cite{yeMachineFeedstockInterdependence2022}. Characterizing these cross-client interdependencies is critical to mitigating the impacts of any disruptions and identifying their root causes.

Several conventional methods have been developed for learning interdependencies. They include Granger causality \cite{granger1980testing, tankNeuralGrangerCausality2022a}, vector autoregression \cite{pmlr-v37-geiger15, pmlr-v48-melnyk16}, and structural equation modeling \cite{kline2023principles}.  However, these methods assume that data from all the clients can be aggregated into a central server. In our industrial setting, this assumption breaks down. Clients generate large volumes of high-dimensional sensor data and are often governed by stringent data-sovereignty regulations. Therefore, centralized approaches for learning interdependencies are not adequate.

\textbf{Limitations of existing decentralized methods.} Federated learning (FL) \cite{mcmahan2017communication} offers a promising alternative by enabling model training directly at the data source, thereby preserving privacy and minimizing communication overhead. In principle, FL provides a framework well-suited for distributed RCA: clients retain sensitive operational data locally while contributing to a shared model that can be used for RCA. However, there are unique challenges that limit the applicability of existing methods. 

\begin{enumerate}[leftmargin=*]
    \item First, existing FL approaches that aim to capture directed dependencies (causality) ~\cite{mian2023nothing,fan2023score} are generally developed under the assumption of a homogeneous, shared feature space across clients (i.e., horizontal FL). This assumption is not applicable to our problem setting, where clients correspond to distinct processes or equipment types, each with unique sensors and feature representations. In such feature-partitioned (i.e., vertical FL) environments, cross-client causal modeling becomes substantially more challenging due to heterogeneous data features and site-specific temporal dynamics. 
    \item Second, our problem involves highly complex latent system-wide interactions that require discovering unknown, possibly causal, interdependencies directly from decentralized data \cite{zhengModelingAnalysisCascading2024, buldyrev2010catastrophic}. Unfortunately, federated graph learning formulations ~\cite{wan2024federated, chen2021fedgraph} do not support this capability. For example, federated graph neural networks (FedGNNs) such as ~\cite{he2021fedgraphnn, mai2023vertical, meng2021cross} rely on the availability of an explicit or partially observable adjacency matrix to encode the graph structure. This adjacency matrix typically reflects pre-defined relationships among clients, which are generally unknown in our problem setting.

    \item Third, in many industrial settings, each client operates a proprietary model supplied by its original equipment manufacturer (OEM), which cannot be modified, retrained, or directly updated. Most state-of-the-art FL algorithms assume that client models are trainable and participate in federated parameter updates. However, this assumption fails in scenarios where model internals are inaccessible due to intellectual property constraints or certification requirements. Thus, learning must occur without altering the underlying client models. This poses a fundamental departure from prevailing FL paradigms.
\end{enumerate}

\textbf{Main Contributions:}
This paper proposes a federated framework that relies on collaboratively learning interdependencies across feature-partitioned OT clients. The key contributions are:
\begin{itemize}[leftmargin=*]
     \item We develop a novel federated cross-client interdependency learning algorithm that utilizes non-linear local ML models at each client to encode interdependency information communicated by an ML-based global server model. The local ML models are used to \textbf{\textit{augment}} the representation generated by proprietary client models.  
    \item Our framework enables decentralized RCA, where the server identifies the source of anomalies without direct access to client states or raw data. We propose that clients utilize the \text{\textit{residuals}} from proprietary client models and their augmented forms to generate anomaly flags that identify the root cause (RC) and distinguish it from the propagated effect (PE).
    \item We provide theoretical guarantees that, under standard assumptions, our framework converges to the performance of a fully \textit{centralized oracle}. Furthermore, to protect sensitive client data, we incorporate differential privacy into training and inference, where states and anomaly flags are perturbed with controlled noise, enabling secure communication.
    \item Extensive evaluations on synthetic datasets confirm the framework's effectiveness in learning cross-client interdependencies, anomaly detection, and RCA performance. We also study the sensitivity to detection threshold and scalability in large systems. We further validate it on a real-world industrial testbed, demonstrating its effectiveness in learning interdependencies and performing decentralized RCA in an industrial control systems dataset. 
\end{itemize}

%% file: RelatedWork.tex
\textbf{Anomaly Detection \& Root Cause Analysis (RCA) in time-series:} Multivariate time-series anomaly detection has been well-studied in centralized settings, addressing variable correlations \cite{Schmidl2022, Foo2022}. Decentralized approaches \cite{Liu2021, Huong2021, Cui2022} improve scalability but fail to model interdependencies. Most anomaly detection methods omit RCA, limiting their mitigation utility. RCA typically uses centralized methods such as Granger causality \cite{granger1980testing, tankNeuralGrangerCausality2022a} and Vector Autoregression \cite{lutkepohl2005new}, which rely on raw data access. Recent RCA studies explore causal discovery and graphical models \cite{pham2024root, assaad2023root, 7563819, budhathokiCausalStructurebasedRoot2022, liCausalInferenceBasedRoot2022, wangIncrementalCausalGraph2023, shahRootCauseDetection2018}, but a few such as \cite{liuDatadrivenRootcauseAnalysis2017, liuRootcauseAnalysisTimeseries2021} address decentralization. Integrating decentralized RCA with anomaly detection, especially under unknown interdependencies, remains unexplored.

\textbf{Vertical Federated Learning (VFL):} 
Our approach aligns with VFL, where clients hold different features of the same sample; \textbf{in our case, features are time-series measurements, and samples correspond to timestamps}. Traditional VFL frameworks focus on classification tasks with fixed labels, which are either shared across clients \cite{FDML, Huang2022, Castiglia2022}, held by one client (called the ``active client/party'') \cite{SecureBoost, AFSGD, VFBSGD}, or stored on servers \cite{Gu2023, Castiglia2023, Li2023}. With few exceptions, such as \cite{VAFL, Ma2023}, most studies use global models as simple aggregators rather than leveraging ML-based models, preventing them from learning interdependencies.

\textbf{Related Domains:} \text{\textit{Federated Multi Task Learning }}\cite{FedMTL, FedMDMTL, FedGraphMTL, FedMSplit} focuses on parameter sharing in horizontally partitioned datasets, overlooking feature-based interdependency modeling.  \text{\textit{Split Learning}} \cite{singh2019detailed, poirot2019split, vepakomma2018split} handles feature-based partitioning by splitting neural network layers but does not address anomaly detection or interdependencies.  

%% file: ProblemSetting.tex
We consider a decentralized setup with $M$ clients. Each client \( m \) observes \textbf{high-dimensional data} \( y_m^t \in \mathbb{R}^{D_m} \) and uses a \textit{proprietary client model} to compute its \textbf{lower-dimensional states} \( {(x_m^t)}_c \in \mathbb{R}^{P_m} \) (\( D_m \gg P_m \)). Since these proprietary client models rely solely on locally available data \( y_m^t \), it limits their ability to accurately capture cross-client interdependencies. To address this limitation, we use iterative optimization  during \underline{training}:
\begin{enumerate}[leftmargin=*]
    \item \textbf{Client.} Each client \text{\textit{augments}} \( {(x_m^t)}_c \) with a \textit{non-linear ML} \( \phi_m(y_m^t; \theta_m) \), producing \( {(x_m^t)}_a \) where \( \theta_m \) \textit{locally encodes interdependencies}. Clients minimize a loss \( (L_m)_a \) using both local (\( \nabla_{\theta_m}(L_m)_a \)) and server-provided (\( \nabla_{\theta_m}L_s \)) gradients. It then updates ML function \( \phi(y_m^t; \theta_m) \) and communicates states -- \( {(x_m^t)}_c, {(x_m^t)}_a \) to the server.
    \item \textbf{Server.} The server processes \( {(x_m^t)}_c, {(x_m^t)}_a \) from all \( M \) clients with a \textit{global non-linear ML} $f_s(.)$ to \textit{globally encode interdependencies}. The server minimizes its loss \( L_s \) and updates its global ML function $f_s$. It then communicates gradients \( \nabla_{{(x_m^t)}_a}L_s \) to the respective clients.
\end{enumerate}
The motivation behind the problem (interdependencies in OT clients), and the methodology of learning interdependencies in a federated setting is depicted in Fig. \ref{fig:flowchart}. 
\begin{figure}
    \centering
\includegraphics[width=\columnwidth]{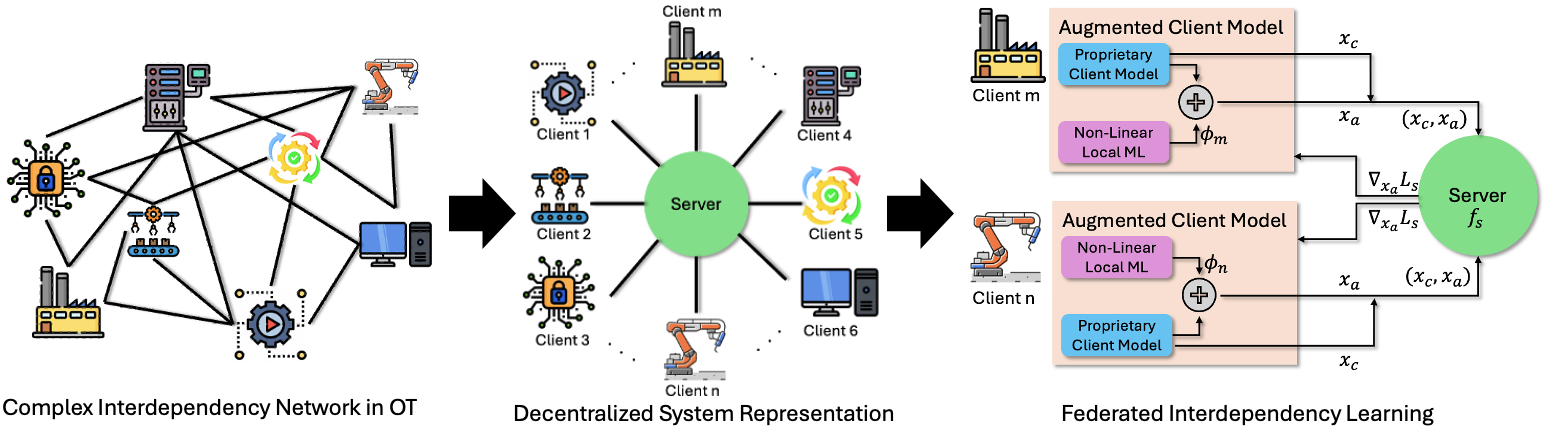} 
    \caption{The proposed federated framework for learning interdependencies across OT clients}
    \label{fig:flowchart}
\end{figure}

The trained framework captures the cross-client interdependencies as \textit{non-interpretable parameters} of ML functions (i.e., $\phi_m, f_s$). Thus, the framework can detect any local anomalies but lacks root cause analysis capabilities. To address this, we perform the following during \underline{inference}: 
\begin{enumerate}[leftmargin=*]
    \item \textbf{Client.} Detect anomalies locally using residual analysis-based binary anomaly flags \( Z_c \) and \( Z_a \) from the proprietary and augmented client models, respectively. Each client communicates their anomaly flag pair \((Z_c, Z_a)\) to the server.
    \item \textbf{Server.} The server then leverages the framework's understanding of cross-client interdependencies to distinguish between the clients that are associated with \textit{root cause} of an anomaly and those experiencing its \textit{propagated effect}. 
\end{enumerate}

%% file: Training.tex
\subsection{Proprietary Client Model}
\textbf{Model.} To handle non-linear dynamics, the proprietary client models are assumed to be \textit{Extended Kalman Filters} (EKF). Each client \( m \) utilizes local data \( y_m \in \mathbb{R}^{D_m}\) to compute states as follows:
\begin{equation}
        {(x^t_m)}_c = f_m\big({(\hat{x}^{t-1}_m)}_c\big), \hspace{0.5cm} \text{and} \hspace{0.5cm}
    {(\hat{x}^t_m)}_c = {(x^t_m)}_c + K_m^c\big(y^t_m - h_m({(x^t_m)}_c)\big) \label{clientmodel_update}
\end{equation}
where \({(\hat{x}^t_m)}_c \in \mathbb{R}^{P_m}\), and \({(x^t_m)}_c \in \mathbb{R}^{P_m}\) are the estimated and predicted states. \( f_m(\cdot) \) defines local state transitions, and \( h_m(\cdot) \) maps states to observations, and \( K_m^c \) is the Kalman gain computed as:
\begin{equation}
    K_m^c = P^-_m H_m^\top \big(H_m P^-_m H_m^\top + R_m\big)^{-1}, \hspace{0.5cm} \text{with} \hspace{0.5cm}     P^-_m = F_m P_m F_m^\top + Q_m
\end{equation}
where, \( H_m = \frac{\partial h_m}{\partial x}\big|_{(x^t_m)_c} \), and \( P^-_m \) and \( R_m \) represent the covariance of predicted state and measurement noise, respectively. Similarly, \( F_m = \frac{\partial f_m}{\partial x}\big|_{(\hat{x}^{t-1}_m)_c} \), \(P_m\) and \(Q_m\) are the covariance of the estimated state and process noise respectively. 

\textbf{Limitation.}
The local state transition function \( f_m(\cdot) \) models only client \( m \)'s dynamics using local data \( y^t_m \) and state \( {(\hat{x}^{t-1}_m)}_c \). Thus, it \textbf{fails to capture cross-client interdependencies} due to the lack of access to information (raw data or states) from other clients (i.e., $\forall$ clients $ i \in M \setminus\{m\}$).

\subsection{Augmented Client Model}
\textbf{State Augmentation.} To address this limitation, the proprietary client model is \textit{augmented} with a \textit{non-linear ML} function \( \phi_m(y^t_m; \theta_m) \), parameterized by \( \theta_m \), as follows:
\begin{equation}
    {(\hat{x}^t_m)}_a = {(\hat{x}^t_m)}_c + \phi_m(y^t_m; \theta_m), \hspace{0.5cm} \text{and} \hspace{0.5cm}
    {(x^t_m)}_a = f_m\big({(\hat{x}^{t-1}_m)}_a\big)
\end{equation}
\textbf{Loss Function and Param. Update.} The augmented client model minimizes a \textit{reconstruction loss} ${(L_m)}_a$ defined as:
$    {(L_m)}_a = \lVert y^t_m - h_m\big({(x^t_m)}_a\big) \rVert_2^2$. The client parameters \( \theta_m \) are then updated as:
\begin{align}
    \theta_m^{k+1} &= \theta_m^k - \eta_1 \nabla_{\theta_m^k} {(L_m)}_a - \eta_2 \nabla_{\theta_m^k} (L_s), \hspace{0.1cm} 
    \text{with} \hspace{0.2cm} \nabla_{\theta_m^k} (L_s) = \color{blue} \big(\nabla_{{(x^t_m)}_a} (L_s)\big) \cdot  \color{red} \big(\nabla_{\theta_m^k} {(x^t_m)}_a\big)
\end{align}
where \(\color{blue} \nabla_{{(\hat{x}^t_m)}_a} (L_s) \) is communicated from the server, and \(\color{red} \nabla_{\theta_m^k} {(\hat{x}^t_m)}_a \) is computed locally.

\textbf{Client to Server.} Each client \( m \) communicates the tuple of states \( [{(\hat{x}^{t-1}_m)}_c, {(x^t_m)}_a] \) to the server.

\subsection{Global Server Model}
The server model uses a \textit{non-linear ML} \( f_s\), serving as a proxy for a global state transition function. It predicts the future state of all $M$ clients using the current state estimates (received from the clients):  
\begin{equation}
    \big\{{(x^t_m)}_s\big\}_{m=1}^M = f_s\big(\{{(\hat{x}^{t-1}_m)}_c\}_{m=1}^M; \theta_s\big).
\end{equation}
\textbf{Loss Function and Param. Update.} The server minimizes the \textit{mean squared error} between it's predicted and client's augmented states: ${(L_s)} = \sum_{m=1}^M \big\| {(x^t_m)}_s - {(x^t_m)}_a \big\|_2^2$. It then updates \( \theta_s \) as: 
\begin{equation}
  \theta_s^{k+1} = \theta_s^k - \eta_3 \nabla_{\theta_s^k} (L_s)  
\end{equation}
\textbf{Server to Client:} The server communicates the gradient \(\color{blue}\nabla_{{(x^t_m)}_a} (L_s) \) to each client \( m \), enabling update of client model's parameter $\theta_m$. Algorithm 1 (see Appendix) provides the steps for training. 


%% file: Inference.tex
During inference \textbf{the server aims to identify the root cause of an anomaly without access to client states or raw data (decentralized)}. This is different from a complete causal discovery, which requires extensive conditional independence tests. Prior works \cite{ikramRootCauseAnalysis, lin2024root, shan2019diagnosis, Wang2023} similarly focus on root cause identification but in centralized settings.

\textbf{Metric.} An anomaly can be detected when a vector \( r \) has a significant deviation from its nominal behavior. If \( \mu \) and \( \Sigma \) represent the mean and covariance of \( r \) then we quantify this deviation as $d_M^2(.)$, which is further used to generate a binary anomaly flag Z as given in eq~\ref{eq:anomaly}. The threshold \( \tau \) is derived statistically from the vector $r$, \textit{\textbf{only during nominal operation}} of the client. 
\begin{equation}\label{eq:anomaly}
d_M^2(r) = (r - \mu)^\top \Sigma^{-1} (r - \mu), \quad \text{and} \quad 
Z =
\begin{cases}
1, & \text{if } d_M^2(r) > \tau \hspace{0.2cm} \textbf{(anomaly)}, \\
0, & \text{otherwise}
\end{cases}
\end{equation}
\textbf{Anomaly Detection.}  Each client \( m \) is equipped with two models \textbf{(1)} a proprietary client model, and \textbf{(2)} an augmented client model, generating residuals ${(r_m^t)}_c$, and ${(r_m^t)}_a$, respectively: 
\begin{equation}
    {(r_m^t)}_c = y^t_m - h_m({(x^t_m)}_c), \hspace{0.5cm} \text{and} \hspace{0.5cm}
    {(r_m^t)}_a = y^t_m - h_m({(x^t_m)}_a)
\end{equation}
These residuals are an integral part of the client models, appearing in eq \ref{clientmodel_update} and the \textit{loss function} of the augmented client model, respectively. They can be interpreted as the error in reconstructing the raw data $y_m^t$ using the two client models. For the remainder of this section, we drop indices $m$ and $t$ for notational brevity. Using $r_c$ and $r_a$, the client models can generate anomaly flags $Z_c$, and $Z_a$ respectively. With \( \tau_c \) and \( \tau_a \) as thresholds for the proprietary and augmented models respectively,
\begin{align}
Z_c =
\begin{cases}
1, & \text{if } d_M^2(r_c) > \tau_c \\
0, & \text{otherwise.}
\end{cases}, \hspace{0.5cm} \text{and} \hspace{0.5cm}
Z_a =
\begin{cases}
1, & \text{if } d_M^2(r_a) > \tau_a\\
0, & \text{otherwise.}
\end{cases}
\end{align}
\textbf{Communication (Inference).} Clients communicate the anomaly flags \( Z_c \) and \( Z_a \) to the server.

\textbf{Root Cause Analysis.} For any client, when \( Z_a = 1 \), the client  detects an anomaly. Since the augmented model is expected to learn interdependencies, so \( Z_a = 1 \) confirms a local fault (given interdependent clients and perfect training), \textbf{not} caused by interdependencies. Conversely, \( Z_c = 1 \) indicates an anomaly that \textbf{might be} interdependency-driven (propagation of effects). 

Given the above logic, we can say that the anomaly associated with a \textit{root cause } (RC) client originates from a local fault within that client, and identified when \( (Z_c, Z_a) = (1, 1) \). On the contrary, a \textit{propagated effect} (PE) occurs due to interdependency with the RC client, identified when \( (Z_c, Z_a) = (1, 0) \). For any two clients $C_1$, and $C_2$, Table~\ref{tab:primary_secondary_anomalies} provides a look-up table for RCA.

\begin{remark}
      A flag \( (Z_c, Z_a) = (1, 1) \) from an isolated client at a single time instant \textbf{cannot} confirm it as the RC, as it may be a false alarm, likely due to \textit{imperfect training} of the framework. PEs offer critical contextual evidence by revealing anomaly spread among interdependent clients. RCA requires monitoring $(Z_c, Z_a)$ across all clients over a long time horizon to identify consistent patterns.
\end{remark}

\begin{table*}
    \centering
    \caption{RCA for any two pair of clients $C_1$ and $C_2$ at any time $t$ using the anomaly flags}
    \begin{tabular}{|ccc|ccc|}
        \hline
         \multicolumn{1}{|c}{\textbf{Client \( C_1 \)}} & \multicolumn{1}{c}{\textbf{Client \( C_2 \)}} & \multicolumn{1}{c|}{\textbf{RCA}} & \multicolumn{1}{c}{\textbf{Client \( C_1 \)}} & \multicolumn{1}{c}{\textbf{Client \( C_2 \)}} & \multicolumn{1}{c|}{\textbf{RCA}} \\ 
         $(Z_c, Z_a)$ & $(Z_c, Z_a)$ & {(RC, PE)} & $(Z_c, Z_a)$ & $(Z_c, Z_a)$&  {(RC, PE)}\\ \hline
        (1, 1) & (0, 0) & (\( C_1 \), -- ) & (1, 0) & (1, 1) & (\( C_2 \), \( C_1 \)) \\ 
        (0, 0) & (1, 1) & ($C_2$, --) & (0, 0) & (0, 0) & No anomaly \\
        (1, 1) & (1, 0) & ($C_1$, $C_2$) & (1, 0) & (1, 0) & (Neither, Both)\\
         (0, 1) & (1, 1) & Imperfect training & (0, 1) & (0, 1) & Independent Clients \\
        (1, 1) & (0, 1) & Imperfect training & (1, 1) & (1, 1) & Violates Assumptions\\
        \hline
    \end{tabular}
\label{tab:primary_secondary_anomalies}
\end{table*}
\textbf{Assumptions.} The federated RCA methodology makes the following assumptions:\\
\textbullet{} \textit{\textbf{No Feedback Loops.}} The underlying causal structure of the system is a directed acyclic graph. \\
\textbullet{} \textit{\textbf{Causal Sufficiency.}} All clients share their anomaly flags and there are no latent confounders.\\
\textbullet{} \textit{\textbf{Multiple RCs.}} RC anomalies occurring at any client follow a Poisson arrival process.

\begin{proposition}\label{prop_poisson}
For \( M \) clients, let \( N_m(\Delta t) \) represent the number of root cause anomalies occurring at \( C_m \) within time \( \Delta t \). The probability of multiple clients experiencing root causes in \( \Delta t \) is negligible for small \( \Delta t \), i.e.,  
\(
P\left(\sum_{m=1}^M N_m(\Delta t) > 1\right) \to 0 \quad \text{as} \quad \Delta t \to 0.
\)
\end{proposition}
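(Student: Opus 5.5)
We model each $N_m(\cdot)$ as a Poisson process with rate $\lambda_m>0$, as in the Multiple RCs assumption. We also assume the $M$ processes are mutually independent, reading ``root cause anomalies occurring at any client'' as arising from separate local faults. The cleanest route goes through superposition. The total count $N(\Delta t)=\sum_{m=1}^M N_m(\Delta t)$ is then again Poisson, with rate $\Lambda=\sum_{m=1}^M \lambda_m$. This follows from the product of the moment generating functions, $\prod_m \exp(\lambda_m\Delta t(e^s-1))=\exp(\Lambda\Delta t(e^s-1))$. So the whole statement reduces to a single Poisson variable with mean $\Lambda\Delta t$.

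Next we compute the complement directly:
\begin{equation*}
P\big(N(\Delta t)>1\big)=1-e^{-\Lambda\Delta t}-\Lambda\Delta t\, e^{-\Lambda\Delta t}.
\end{equation*}
Expanding $e^{-\Lambda\Delta t}=1-\Lambda\Delta t+\tfrac12(\Lambda\Delta t)^2+O(\Delta t^3)$ gives $P(N(\Delta t)>1)=\tfrac12(\Lambda\Delta t)^2+O(\Delta t^3)=o(\Delta t)$. In particular it tends to $0$ as $\Delta t\to 0$. We will state the stronger $o(\Delta t)$ conclusion, since that is what makes ``negligible'' meaningful relative to the $O(\Delta t)$ probability of a single root cause. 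A non-asymptotic bound also follows: $P(N>1)\le \tfrac12(\Lambda\Delta t)^2$, using $1-e^{-x}-xe^{-x}\le x^2/2$ for $x\ge 0$.

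The event of interest, that multiple clients experience root causes, is contained in $\{N(\Delta t)>1\}$. That event also includes two anomalies at the same client, so bounding $N$ covers both readings of the statement.

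The only real obstacle is the independence across clients, which the paper's assumptions do not state explicitly. If independence fails, we would use a union bound that needs no independence. The event $\{N>1\}$ is contained in
\begin{equation*}
\bigcup_m\{N_m\ge 2\}\ \cup\ \bigcup_{m\neq m'}\{N_m\ge1,\,N_{m'}\ge1\}.
\end{equation*}
Each term $P(N_m\ge2)=O(\Delta t^2)$ by the single-process computation above. Each pair term satisfies $P(N_m\ge1,N_{m'}\ge1)\le\min\big(P(N_m\ge1),P(N_{m'}\ge1)\big)\le \lambda_{\max}\Delta t\to0$. Summing over the $O(M^2)$ terms, with $M$ fixed, gives the limit $0$ without independence, though only at rate $O(\Delta t)$ rather than $o(\Delta t)$. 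We would present the independent case as the main proof and remark on this fallback.
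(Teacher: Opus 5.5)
Your proof is correct and follows the paper's route. Both arguments assume independent Poisson arrivals; the paper's proof adds independence as a hypothesis, just as you do. Both then show $P\bigl(\sum_m N_m(\Delta t)>1\bigr)=O(\Delta t^2)$. The difference is that you actually derive this bound via superposition and the closed form $1-e^{-x}-xe^{-x}\le x^2/2$, whereas the paper only asserts it.

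Your independence-free fallback is also valid and worth keeping, since the stated assumptions never include independence. It can be simplified: $P(N>1)\le P(N\ge1)\le\sum_m P(N_m\ge1)\le\Lambda\Delta t$ already suffices.
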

Proposition \ref{prop_poisson} ensures that multiple clients \textbf{cannot} share root causes anomalies simultaneously. Thus, there is always a unique root cause client that can be identified using Table \ref{tab:primary_secondary_anomalies}. The pseudocode for inference is provided in Algorithm 2 of the Appendix. 


%% file: Convergence.tex
This section demonstrated that the proposed framework's performance converges to a centralized oracle, confirming the framework's understanding of interdependencies. 

\begin{definition}[\textbf{Centralized Oracle}]
An \textit{Extended Kalman Filter} that utilizes data from all \( M \) clients, represented as \( y^t \), to predict the one-step ahead state \( x^t_o \) and estimate the current state \( \hat{x}^t_o \):
\begin{equation}
    x^t_o = f\big(\hat{x}^{t-1}_o\big) \hspace{0.5cm} \text{and,} \hspace{0.5cm}
    \hat{x}^t_o = x^t_o + K^o\big(y^t_o - h(x^t_o)\big)
\end{equation}
where $K^o$ is defined similar to the proprietary client model's $K^c_m$, with the difference that the Jacobians in the oracle are $F = \frac{\partial f}{\partial x} |_{\hat{x}^{t-1}_o}$, and $H = \frac{\partial h}{\partial x} |_{\hat{x}^{t-1}_o}$. We utilize an operator \( {Extract}_m \) to isolate the components of a vector (e.g., \( x^t_o, \hat{x}^t_o, y^t_o \)) or the sub-matrix of a matrix (e.g., \( K^o \)) corresponding only to client \( m \). 
 
\end{definition}
\begin{assumption}
The client loss function \({(L_m)}_a)\) is convex and \(\mathcal{L}_{L_m}\)-Lipschitz smooth  w.r.t. \(\theta_m\).
\end{assumption}

\begin{assumption}
The non-linear local ML function \(\phi_m(y^t_m; \theta_m)\) is \(\mathcal{L}_{\theta_{m}} \)-Lipschitz w.r.t. \(\theta_m\)
\end{assumption}

\begin{definition}[\textbf{Irreducible Local Error} \(\epsilon^*\)]
\label{def:irreducible_error}
Consider the optimal parameter \(\theta_m^*\) for the local ML function \(\phi_m(\cdot; \theta)\).  The \emph{\textbf{irreducible local error}}  \(\epsilon^* \;:=\;
\mathbb{E}\bigl[\bigl\|\phi_m\bigl(y_m^{t-1}; \theta_m^*\bigr) - \sum_{n \neq m} \bigl(\hat{x}_n^{t-1}\bigr)_o\bigr\|\bigr]\) is the discrepancy between the optimal local ML prediction and the sum of other clients’ states. 

If trained perfectly, \(\epsilon^*\) may be negligible or zero; otherwise, it quantifies the \textbf{training error of the local ML model in capturing interdependencies} across clients.
\end{definition}
\begin{lemma}[\textbf{Local Learning of Interdependency }]\label{lem:phi_approx_error}
For fixed step size $\eta_1, \eta_2 \leq 1/L$, we obtain, $\mathbb{E}\bigl[\|\phi_m(y_m^{t-1}; \theta_m^k) - \sum_{n \neq m} (\hat{x}_n^{t-1})_o\|\bigr] \leq \mathcal{L}_{\theta_{m}} \|\theta_m^k - \theta_m^*\| + \epsilon^*$. 
As \(k \to \infty\), \(\|\theta_m^k - \theta_m^*\| \to 0\), and the limit of the above norm difference is bounded by \(\epsilon^*\).
\end{lemma}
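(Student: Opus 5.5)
The argument has two parts. First comes a deterministic bound that holds for every iterate $\theta_m^k$ and is just the triangle inequality plus Lipschitz continuity. Second, the iterates $\theta_m^k$ must converge to $\theta_m^*$ under the combined update with step sizes $\eta_1,\eta_2\le 1/L$.

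\textbf{The pointwise bound.} Fix $t$ and $m$, and write $S_m^{t-1} := \sum_{n\neq m}(\hat{x}_n^{t-1})_o$. Add and subtract $\phi_m(y_m^{t-1};\theta_m^*)$ inside the norm:
\begin{equation*}
\bigl\|\phi_m(y_m^{t-1};\theta_m^k) - S_m^{t-1}\bigr\| \le \bigl\|\phi_m(y_m^{t-1};\theta_m^k) - \phi_m(y_m^{t-1};\theta_m^*)\bigr\| + \bigl\|\phi_m(y_m^{t-1};\theta_m^*) - S_m^{t-1}\bigr\|.
\end{equation*}
By the second assumption, the first term is at most $\mathcal{L}_{\theta_m}\|\theta_m^k-\theta_m^*\|$. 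This bound holds uniformly in $y_m^{t-1}$.

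Now take expectations over the data. The first term is deterministic given $\theta_m^k$, so it passes through the expectation unchanged. The expectation of the second term is exactly $\epsilon^*$ by Definition~\ref{def:irreducible_error}. This gives the displayed inequality of Lemma~\ref{lem:phi_approx_error}.

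\textbf{Convergence of the iterates.} It remains to show $\|\theta_m^k-\theta_m^*\|\to 0$. I would treat the client update as gradient descent on the composite objective $J_m(\theta_m) = \eta_1 (L_m)_a + \eta_2 L_s$ (viewed as a function of $\theta_m$), with unit step. Equivalently, it is gradient descent with step $\max(\eta_1,\eta_2)$ on a positively weighted combination of the two losses.

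The first assumption gives convexity and $\mathcal{L}_{L_m}$-smoothness of $(L_m)_a$. I would need the same properties for $L_s$ as a function of $\theta_m$, and I would take $L$ to be a common smoothness constant. This is where the hypothesis "$\eta_1,\eta_2\le 1/L$" is used.

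The standard argument then proceeds as follows. For a convex, $L$-smooth objective with step $\le 1/L$, the descent lemma and co-coercivity give
\begin{equation*}
\|\theta^{k+1}-\theta^*\|^2 \le \|\theta^k-\theta^*\|^2 - \eta(2/L-\eta)\,\|\nabla J_m(\theta^k)\|^2.
\end{equation*}
So the distance to the minimizer is nonincreasing and $\nabla J_m(\theta^k)\to 0$. Since the iterates are bounded and Fejér monotone with respect to the minimizer set, they converge to a minimizer. I identify that minimizer with $\theta_m^*$, assuming it is unique or that $\theta_m^*$ is defined as the limit point. Passing to the limit in the pointwise bound then yields $\limsup_k \mathbb{E}\|\phi_m(y_m^{t-1};\theta_m^k)-S_m^{t-1}\| \le \epsilon^*$.

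\textbf{Main obstacle.} The hard step is justifying convergence of the coupled update. The server gradient $\nabla_{\theta_m}L_s$ depends on $\theta_s^k$, which is itself moving, so $J_m$ is not a fixed objective. I would first handle this in a two-timescale or fixed-server regime: treat $\theta_s$ as frozen, or assume $\theta_s^k$ converges, so that $L_s$ becomes a fixed convex, smooth function of $\theta_m$. I would then argue that the perturbation from the moving server is summable. Beyond that, I would rely on the paper's implicit assumption that $\theta_m^*$ is the common minimizer, so that the joint convex-smooth structure carries the standard gradient-descent convergence result.
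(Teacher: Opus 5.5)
Your pointwise bound is the paper's argument. You add and subtract $\phi_m(y_m^{t-1};\theta_m^*)$, control the first piece with the $\mathcal{L}_{\theta_m}$-Lipschitz assumption, and identify the expectation of the second piece with $\epsilon^*$. The displayed inequality is therefore proved exactly as in the paper. For the convergence half, the paper gives no argument. It asserts $\theta_m^k\to\theta_m^*$ from convexity and smoothness of $(L_m)_a$ plus the step-size condition, and ignores the $\eta_2\nabla_{\theta_m}L_s$ term. You are right that this does not follow from the stated hypotheses: nothing is assumed about $L_s$ as a function of $\theta_m$, $\theta_s^k$ moves, and $\theta_m^*$ must minimize the combined objective. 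Your extra assumptions are genuinely needed.

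Your repair, however, has a concrete error in the step-size bookkeeping. For unit-step gradient descent on $J_m=\eta_1(L_m)_a+\eta_2L_s$, the smoothness constant is $\eta_1\mathcal{L}_{L_m}+\eta_2L$, which can reach $2$ when $\eta_1=\eta_2=1/L$. Equivalently, the reweighted objective you run with step $\max(\eta_1,\eta_2)$ is up to $2L$-smooth, not $L$-smooth. So the coefficient $\eta(2/L-\eta)$ in your displayed Fej\'er inequality must be replaced by $\eta(1/L-\eta)$, which vanishes at $\eta=1/L$.

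This is a real failure, not a loose constant. Take $(L_m)_a=L_s=\tfrac{L}{2}\|\theta_m\|^2$, which is convex, $L$-smooth, with common minimizer $0$, and set $\eta_1=\eta_2=1/L$. The update becomes $\theta_m^{k+1}=-\theta_m^k$, which never converges. You need something like $\eta_1\mathcal{L}_{L_m}+\eta_2\mathcal{L}_{L_s}<2$ (for instance $\eta_1+\eta_2\le 1/L$), where $\mathcal{L}_{L_s}$ is the smoothness constant of $L_s$ in $\theta_m$. The lemma's own hypothesis $\eta_1,\eta_2\le 1/L$ is too weak, and your reduction inherits that weakness.

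Separately, the claim that ``the perturbation from the moving server is summable'' is not derived from anything in the setting, so it must be imposed as an assumption. One form: summability over $k$ of $\|\nabla_{\theta_m}L_s(\theta_m^k;\theta_s^k)-\nabla_{\theta_m}L_s(\theta_m^k;\theta_s^\infty)\|$ for some limiting server parameter $\theta_s^\infty$. Under that assumption the inexact (quasi-Fej\'er) gradient iteration converges. With both fixes your convergence step is sound, and it then supplies the justification the paper's proof omits.
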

Lemma \ref{lem:phi_approx_error} shows that the \textbf{local ML function at each client approximates the contribution from other clients}. Next we assume bounds on the terms involved in the EKF-based proprietary client model to prove the convergence of our decentralized framework to the centralized oracle.
\begin{assumption}
\label{ass:bounded_state_space}
The states of each client $m$ is bounded, i.e., $\mathbb{E}\bigl[\|{(x^t_m)}_o\|\bigr] \leq C_x \forall t$, where $C_x > 0$
\end{assumption}
\begin{assumption}
\label{ass:lipschitz_smooth}
The (local) state transition function \(f_m(\cdot)\) and the measurement function \(h_m(\cdot)\) are Lipschitz continuous with Lipschitz constants \(\mathcal{L}_{f_m}\) and \(\mathcal{L}_{h_m}\), respectively.
\end{assumption}
The state transition functions of the proprietary client model and the centralized oracle \textbf{differ structurally}. Each client \( m \) uses \( f_m \) in its local client model, while the centralized oracle uses \( f \) and performs \( {Extract}_m(f(\mathbf{z})) \). Proposition \ref{prop:bounded_structuraldifference} shows that the structural difference is bounded. 
\begin{proposition}[\textbf{Structural Diff.}]\label{prop:bounded_structuraldifference}
For client $m$, $\exists \delta_m$ s.t., $\mathbb{E}\bigl[\|f_m(z_m) - \text{Extract}_m(f(\mathbf{z}))\|\bigr] \leq \delta_m$.
\end{proposition}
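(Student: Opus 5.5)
We bound the structural difference by comparing both maps against a common reference point, the origin. The Lipschitz continuity in Assumption~\ref{ass:lipschitz_smooth} controls how far each map moves from its value at zero, and Assumption~\ref{ass:bounded_state_space} controls the size of the arguments. Write $\mathbf{z} = (z_1,\dots,z_M)$ with $z_m$ the block belonging to client $m$. We will need $f$ itself to be Lipschitz, say with constant $\mathcal{L}_f$. This is the natural analogue of Assumption~\ref{ass:lipschitz_smooth} for the oracle. Since $\text{Extract}_m$ is a coordinate projection, it is $1$-Lipschitz, so $\text{Extract}_m\circ f$ is $\mathcal{L}_f$-Lipschitz.

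By the triangle inequality,
\begin{align*}
\|f_m(z_m) - \text{Extract}_m(f(\mathbf{z}))\| &\le \|f_m(z_m) - f_m(0)\| + \|f_m(0) - \text{Extract}_m(f(\mathbf{0}))\| \\
&\quad + \|\text{Extract}_m(f(\mathbf{0})) - \text{Extract}_m(f(\mathbf{z}))\| \\
&\le \mathcal{L}_{f_m}\|z_m\| + c_m + \mathcal{L}_f\|\mathbf{z}\|,
\end{align*}
where $c_m := \|f_m(0) - \text{Extract}_m(f(\mathbf{0}))\|$ is a finite constant.

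Next we take expectations. We use $\|\mathbf{z}\| \le \sum_{n=1}^M \|z_n\|$ and the bound $\mathbb{E}\|z_n\| \le C_x$ from Assumption~\ref{ass:bounded_state_space}. This assumption applies because in all later uses $\mathbf{z}$ is an (estimated) state vector of the clients. The result is
\[
\mathbb{E}\bigl[\|f_m(z_m) - \text{Extract}_m(f(\mathbf{z}))\|\bigr] \le \mathcal{L}_{f_m} C_x + \mathcal{L}_f M C_x + c_m =: \delta_m,
\]
which is the claim.

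The main obstacle is justifying the hypotheses rather than carrying out the calculation. Two points need care:
\begin{itemize}
\item The argument needs $f$ to be Lipschitz, not only $f_m$. If the paper's assumption is read strictly as covering only $f_m$, we will state the oracle's Lipschitz property explicitly; it is implicit in the oracle's EKF being well posed. As a fallback, we can assume $\mathbb{E}\|f(\mathbf{z})\|$ is bounded on bounded states. Continuity of $f$ together with bounded support would give this.
\item Assumption~\ref{ass:bounded_state_space} is stated for the oracle's states $(x^t_m)_o$. The argument $\mathbf{z}$ may instead be an estimated state $\hat{x}$, either the oracle's or a client's. We will remark that the same bound is assumed to hold for these states, or derive it from one EKF update step. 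That derivation uses bounded Kalman gains and the Lipschitz property of $h_m$.
\end{itemize}

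With these points settled, the bound is a direct consequence of the triangle inequality.
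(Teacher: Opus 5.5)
Your proposal is correct and follows essentially the same route as the paper's proof: a triangle inequality through a reference point, Lipschitz bounds on $f_m$ and on the oracle map $f$ (the paper likewise invokes a constant $\mathcal{L}_f$, justified by bounded Jacobians), and boundedness of the states to obtain a constant $\delta_m$ that does not depend on $t$. Taking the origin as the reference point and keeping the offset $c_m$ explicitly is more careful than the paper's step. The paper bounds $\|f_m(z_m')-\text{Extract}_m(f(\mathbf{z}))\|$ by $C_m\mathcal{L}_f\|\mathbf{z}-\mathbf{z}'\|$, which implicitly assumes $f_m$ and $\text{Extract}_m\circ f$ agree at the reference point.
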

\begin{definition}[\textbf{Residual of Centralized Oracle}]
    For any client $m$, the residual of the EKF-based centralized oracle is given by ${(r_m^t)}_o := y_m^t - h_m({(x_m^t)}_o)$. 
\end{definition}
\begin{theorem}[\textbf{Local Convergence to Oracle}]
\label{thm:predicted_state_convergence}
Let \(\Delta K_m := K_m^c - \text{Extract}_m(K^o)\). Then, for fixed step sizes \(\eta_1, \eta_2 \leq 1/L\), if $1 + \|\Delta K_m\| \mathcal{L}_{h_m} < \frac{1}{\mathcal{L}_{f_m}}$ then the predicted state of the augmented client model \((x^{t,k}_m)_a\) converges to that of a centralized oracle \((x^t_m)_o\) in steady state:
$  \lim_{k \to \infty} \mathbb{E}\bigl[\|(x^{t,k}_m)_a - (x^t_m)_o\|\bigr] 
\;=\mathcal{O}\bigl(\mathcal{L}_{f_m}(\rho + \epsilon^*) + \delta_m\bigr)
\hspace{0.1cm} \text{where}, \hspace{0.1cm} \rho := \frac{\|\Delta K_m\| \sup_t \|{(r_m^t)}_o\| + \mathcal{L}_{h_m} C_x + \mathcal{L}_{f_m} (M-1) C_x}{1 - (1 + \|\Delta K_m\| \mathcal{L}_{h_m}) \mathcal{L}_{f_m}} $
\end{theorem}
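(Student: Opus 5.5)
The plan is a one-step error recursion in $t$ for the estimated-state gap, closed with a contraction argument, and then transferred to the predicted state. Let $e^{t,k} := (\hat{x}^{t,k}_m)_a - \text{Extract}_m(\hat{x}^t_o)$ denote the gap between the augmented estimate and the oracle estimate for client $m$. From the definitions,
\begin{equation*}
(\hat{x}^{t}_m)_a = (x^t_m)_c + K_m^c\bigl(y^t_m - h_m((x^t_m)_c)\bigr) + \phi_m(y^t_m;\theta_m^k),
\end{equation*}
and $\text{Extract}_m(\hat{x}^t_o) = (x^t_m)_o + \text{Extract}_m(K^o)(r_m^t)_o$. Write $K_m^c = \text{Extract}_m(K^o) + \Delta K_m$ and subtract the two expressions. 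This splits the gap into four groups:
\begin{itemize}
\item[(i)] a prediction mismatch $(x^t_m)_c - (x^t_m)_o$;
\item[(ii)] a gain-mismatch term $\Delta K_m (r_m^t)_o$, bounded by $\|\Delta K_m\|\sup_t\|(r_m^t)_o\|$;
\item[(iii)] a residual mismatch $K_m^c\bigl(h_m((x^t_m)_o) - h_m((x^t_m)_c)\bigr)$, controlled by $\mathcal{L}_{h_m}$ times the prediction gap;
\item[(iv)] the augmentation term $\phi_m$, to be compared with the missing cross-client contribution $\sum_{n\neq m}(\hat{x}_n)_o$ via Lemma~\ref{lem:phi_approx_error}.
\end{itemize}

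\textbf{Predicted-state gap.} I then bound the predicted-state gap through the transition. The augmented prediction is $f_m((\hat{x}^{t-1}_m)_a)$ and the oracle prediction is $\text{Extract}_m(f(\hat{x}^{t-1}_o))$. Adding and subtracting $f_m(\text{Extract}_m \hat{x}^{t-1}_o)$, Proposition~\ref{prop:bounded_structuraldifference} gives
\begin{equation*}
\mathbb{E}\|(x^{t,k}_m)_a - (x^t_m)_o\| \le \mathcal{L}_{f_m}\,\mathbb{E}\|e^{t-1,k}\| + \delta_m .
\end{equation*}
Substituting the decomposition of $e^{t-1,k}$ and bounding the state-dependent pieces should produce a recursion of the form
\begin{equation*}
E_t \le (1+\|\Delta K_m\|\mathcal{L}_{h_m})\mathcal{L}_{f_m} E_{t-1} + \mathcal{L}_{f_m}\Bigl(\|\Delta K_m\|\sup_t\|(r^t_m)_o\| + \mathcal{L}_{h_m}C_x + \mathcal{L}_{f_m}(M-1)C_x + \mathcal{L}_{\theta_m}\|\theta^k_m-\theta^*_m\| + \epsilon^*\Bigr) + \delta_m .
\end{equation*}
The $C_x$ terms come from Assumption~\ref{ass:bounded_state_space}. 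The oracle's $h$ and the local $h_m$ evaluated at bounded states give the $\mathcal{L}_{h_m}C_x$ term. The $(M-1)$ other clients' states, passed through $f_m$'s Lipschitz bound, give $\mathcal{L}_{f_m}(M-1)C_x$.

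\textbf{Solving the recursion.} The hypothesis $1+\|\Delta K_m\|\mathcal{L}_{h_m}<1/\mathcal{L}_{f_m}$ makes the coefficient $a := (1+\|\Delta K_m\|\mathcal{L}_{h_m})\mathcal{L}_{f_m}$ strictly less than one. Unrolling the recursion in $t$ then gives a steady-state bound of the form $(\text{forcing})/(1-a)$, and this is exactly where the denominator of $\rho$ comes from. Taking $k\to\infty$ and invoking Lemma~\ref{lem:phi_approx_error} (which uses $\eta_1,\eta_2\le 1/L$ and the convexity and smoothness assumptions) removes the $\|\theta^k_m-\theta^*_m\|$ term, leaving $\epsilon^*$. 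Grouping the constants yields $\mathcal{O}(\mathcal{L}_{f_m}(\rho+\epsilon^*)+\delta_m)$.

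\textbf{Main obstacle.} I expect step (iv), together with the bookkeeping of which pieces are treated as recursive and which as bounded forcing, to be the hardest part. Matching $\phi_m$ against the cross-client term requires interpreting the oracle's extracted update as "local update plus the sum of the other clients' states". My first attempt would be to add and subtract $\sum_{n\neq m}(\hat{x}^{t-1}_n)_o$ inside the argument of $f_m$ and bound the remainder crudely by $(M-1)C_x$. I would also choose the order of the $k$- and $t$-limits so that the expectation bounds stay uniform in $t$: take a uniform bound in $t$ for each fixed $k$ first, then let $k\to\infty$.
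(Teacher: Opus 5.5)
\textbf{Gap: the recursion is run on the wrong quantity.} The step that fails is the claimed recursion $E_t \le (1+\|\Delta K_m\|\mathcal{L}_{h_m})\mathcal{L}_{f_m}E_{t-1}+\cdots$ for your $E_t=\mathbb{E}\|(x^{t,k}_m)_a-(x^t_m)_o\|$. In this model the augmented branch is open-loop. We have $(\hat{x}^{t}_m)_a=(\hat{x}^{t}_m)_c+\phi_m(y^t_m;\theta^k_m)$, and the augmented prediction $(x^{t}_m)_a$ is never fed back into the filter. Your own decomposition shows this: the recursive piece (i) of $e^{t-1,k}$ is the \emph{proprietary} gap $(x^{t-1}_m)_c-(x^{t-1}_m)_o$, not $(x^{t-1}_m)_a-(x^{t-1}_m)_o$. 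Nothing lets you replace it by $E_{t-1}$. You could force closure through $\|(x^{t-1}_m)_c-(x^{t-1}_m)_o\|\le E_{t-1}+\mathcal{L}_{f_m}\|\phi_m(y^{t-2}_m;\theta^k_m)\|$, but that adds new $\|\phi_m\|$ forcing. Worse, every forcing term you place inside the recursion gets multiplied by $1/(1-a)$ when you unroll it. Even granting your recursion as written, you obtain $\mathcal{L}_{f_m}\rho+(\mathcal{L}_{f_m}\epsilon^*+\delta_m)/(1-a)$. Since $1/(1-a)$ is unbounded under the hypothesis $a<1$, this is not the stated $\mathcal{L}_{f_m}(\rho+\epsilon^*)+\delta_m$ up to an absolute constant, and ``grouping constants'' does not fix it. There is also a bookkeeping mismatch. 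Your term (iii) is controlled by $\|K^c_m\|\mathcal{L}_{h_m}$, which does not give the coefficient $1+\|\Delta K_m\|\mathcal{L}_{h_m}$ used in the hypothesis. To get that coefficient you must split the gain difference as $\Delta K_m\bigl(y^t_m-h_m((x^t_m)_c)\bigr)+\text{Extract}_m(K^o)\bigl(h_m((x^t_m)_o)-h_m((x^t_m)_c)\bigr)$. The second piece must then be treated as bounded forcing of order $\mathcal{L}_{h_m}C_x$, not recursively.

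\textbf{The missing idea, and how the paper argues.} Run the contraction only on $D_t:=\|\mathbb{E}[(\hat{x}^t_m)_c]-\mathbb{E}[(\hat{x}^t_m)_o]\|$, the gap between the self-contained proprietary EKF and the oracle. This gives $D_t\le\|\Delta K_m\|\,\mathbb{E}\|(r^t_m)_o\|+\mathcal{L}_{h_m}C_x+(1+\|\Delta K_m\|\mathcal{L}_{h_m})\bigl(\mathcal{L}_{f_m}D_{t-1}+\mathcal{L}_{f_m}(M-1)C_x\bigr)$, hence $\lim_t D_t\le\rho$. Here the $(M-1)C_x$ term accounts for the other clients' inputs that the proprietary transition lacks. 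The augmentation then enters in one non-recursive transfer step. Compare $f_m\bigl((\hat{x}^{t-1}_m)_c+\phi_m(y^{t-1}_m;\theta^k_m)\bigr)$ with $f_m\bigl((\hat{x}^{t-1}_m)_o+\sum_{n\neq m}(\hat{x}^{t-1}_n)_o\bigr)$ by Lipschitz continuity, which gives $\mathcal{L}_{f_m}\bigl(D_{t-1}+\|\phi_m(y^{t-1}_m;\theta^k_m)-\sum_{n\neq m}(\hat{x}^{t-1}_n)_o\|\bigr)$. Proposition~\ref{prop:bounded_structuraldifference} then supplies the $\delta_m$ separating the latter from $\text{Extract}_m(f(\hat{x}^{t-1}_o))$, and Lemma~\ref{lem:phi_approx_error} handles the $\phi_m$ term.

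This resolves your step (iv). Here $\phi_m$ is matched directly against $\sum_{n\neq m}(\hat{x}^{t-1}_n)_o$ inside the argument of $f_m$. Your comparison point $f_m(\text{Extract}_m\hat{x}^{t-1}_o)$ instead leaves $\phi_m$ unmatched and forces the crude $(M-1)C_x$ remainder. It also removes your order-of-limits concern. $D_t$ does not depend on $k$, so $t\to\infty$ acts only on $D_t$ and $k\to\infty$ only on $\|\theta^k_m-\theta^*_m\|$. As a result, $\epsilon^*$ and $\delta_m$ appear with constant one.
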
 

Theorem~\ref{thm:predicted_state_convergence} shows that the predicted state of the augmented client model converges to that of a centralized oracle in the steady state. For \textbf{perfect learning of interdependency (\(\epsilon^* \to 0\)) and small structural difference (\(\delta_m \to 0)\)}, the augmented client model converges to the centralized oracle up to a constant $\mathcal{L}_{f_m} \rho$. This constant is a function of the parameters of the state space models. 
\begin{definition}[\textbf{Irreducible Global Error}]
\label{def:irreducible_global_error}
The \emph{\textbf{irreducible global error}} \(\sigma^*\) is defined as
$\sigma^* := \mathbb{E}\Bigl[\Bigl\| f_s\big(\{{(\hat{x}^{t}_m)}_c\}_{m=1}^M; \theta_s^*\big) - f\big(\hat{x}^{t}_o\big) \Bigr\|\Bigr]$
where \(\theta_s^*\) represents the optimal param. of the server model. 

This error captures the \textbf{training error of the global ML model in capturing interdependencies} across all $M$ clients. When trained perfectly, the server converges to the centralized oracle or, \(\sigma^* \to 0\).
\end{definition}
\begin{assumption}
 $f_s$ of the global server model is $\mathcal{L}_{f_s}$-Lipschitz continuous w.r.t to its input. 
\end{assumption}
\begin{theorem}[\textbf{Global Server Convergence to Oracle}]
\label{thm:global_server_convergence}
Let \(\Delta K_m := K_m^c - \text{Extract}_m(K^o)\). Then, for fixed learning rates \(\eta_3 \leq 1/L\), if \(1 + \|\Delta K_m\| \mathcal{L}_{h_m} < \frac{1}{\mathcal{L}_{f_m}}\), the predicted state of the global server model \((x_s^{t,k})\) converges to that of a centralized oracle \((x_o^t)\) in steady state:
$
\lim_{k \to \infty} \mathbb{E}\bigl[\|x_s^{t,k} - x_o^t\|\bigr] 
\;=\mathcal{O}\bigl(\mathcal{L}_{f_s} M \rho + \sigma^*\bigr)
\hspace{0.1cm} \text{where}, \hspace{0.1cm}
\rho := \frac{\|\Delta K_m\| \sup_t \|{(r_m^t)}_o\| + \mathcal{L}_{h_m} C_x + \mathcal{L}_{f_m} (M-1) C_x}{1 - (1 + \|\Delta K_m\| \mathcal{L}_{h_m}) \mathcal{L}_{f_m}} $
\end{theorem}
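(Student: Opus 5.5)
The natural route is a triangle inequality through the server evaluated at its optimal parameter $\theta_s^*$. For each iterate $k$ we split
\begin{align*}
\mathbb{E}\bigl[\|x_s^{t,k} - x_o^t\|\bigr] \le\; & \mathbb{E}\bigl[\|f_s(\{(\hat{x}^{t-1}_m)_c\}_{m=1}^M;\theta_s^k) - f_s(\{(\hat{x}^{t-1}_m)_c\}_{m=1}^M;\theta_s^*)\|\bigr] \\
& + \mathbb{E}\bigl[\|f_s(\{(\hat{x}^{t-1}_m)_c\}_{m=1}^M;\theta_s^*) - f(\hat{x}^{t-1}_o)\|\bigr].
\end{align*}
This identity uses the definition $x_o^t = f(\hat{x}^{t-1}_o)$.

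\textbf{The parameter term.} The first term is an optimization error. Under the convexity and $L$-smoothness of $L_s$, implicitly assumed as for the client loss, gradient descent with $\eta_3 \le 1/L$ gives $\|\theta_s^k - \theta_s^*\| \to 0$, exactly as in Lemma~\ref{lem:phi_approx_error}. Assuming $f_s$ is also continuous (Lipschitz) in $\theta_s$, this term vanishes as $k \to \infty$.

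\textbf{The irreducible-error term.} The second term is not exactly $\sigma^*$, because Definition~\ref{def:irreducible_global_error} compares $f_s$ on the inputs $\{(\hat{x}^t_m)_c\}$ against $f(\hat{x}^t_o)$; up to the time shift it is precisely $\sigma^*$. To make the role of $\mathcal{L}_{f_s}$ and $\rho$ explicit, I would instead insert the intermediate point $f_s(\{\mathrm{Extract}_m(\hat{x}^{t-1}_o)\};\theta_s^*)$. The Lipschitz assumption on $f_s$ then bounds the input-mismatch part by
\[
\mathcal{L}_{f_s}\sum_{m=1}^M \mathbb{E}\bigl[\|(\hat{x}^{t-1}_m)_c - (\hat{x}^{t-1}_m)_o\|\bigr],
\]
and the remaining part is absorbed into $\sigma^*$.

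\textbf{The client estimation gap.} The core step is to bound $e_m^t := \mathbb{E}\|(\hat{x}^t_m)_c - (\hat{x}^t_m)_o\|$ uniformly by $\mathcal{O}(\rho)$. Subtract the oracle update $\mathrm{Extract}_m$ from the client EKF update in \eqref{clientmodel_update}, and write $K_m^c = \mathrm{Extract}_m(K^o) + \Delta K_m$. This gives
\[
(\hat{x}^t_m)_c - (\hat{x}^t_m)_o = \bigl[(x^t_m)_c - (x^t_m)_o\bigr] + \Delta K_m (r^t_m)_o + K_m^c\bigl[h_m((x^t_m)_o) - h_m((x^t_m)_c)\bigr] + \cdots
\]
Using the Lipschitz constants $\mathcal{L}_{h_m}$ and $\mathcal{L}_{f_m}$, the bounded states of Assumption~\ref{ass:bounded_state_space}, and the fact that the oracle's $m$-th prediction depends on the other $M-1$ clients through $f$ (contributing $\mathcal{L}_{f_m}(M-1)C_x$), I expect a recursion of the form
\[
e_m^t \le (1 + \|\Delta K_m\|\mathcal{L}_{h_m})\mathcal{L}_{f_m}\, e_m^{t-1} + \|\Delta K_m\|\sup_t\|(r^t_m)_o\| + \mathcal{L}_{h_m}C_x + \mathcal{L}_{f_m}(M-1)C_x.
\]
The hypothesis $(1 + \|\Delta K_m\|\mathcal{L}_{h_m})\mathcal{L}_{f_m} < 1$ makes this recursion a contraction. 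Unrolling it in steady state gives $e_m^t \le \rho$, presumably the same bound used inside the proof of Theorem~\ref{thm:predicted_state_convergence}, which I would reuse. Summing over the $M$ clients yields $\mathcal{L}_{f_s} M \rho$.

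\textbf{Conclusion and main obstacle.} Combining the pieces and letting $k \to \infty$ gives $\mathcal{O}(\mathcal{L}_{f_s} M \rho + \sigma^*)$. I expect the main obstacle to be the recursion step: getting exactly the constants in $\rho$ requires handling $K_m^c h_m(\cdot)$ crudely. In particular, the term $\mathcal{L}_{h_m}C_x$ suggests bounding $h_m$ of the oracle state directly rather than through a difference. A second delicate point is reconciling the structural mismatch between $f_m$ and $\mathrm{Extract}_m f$; either $\delta_m$ must be folded into the $\mathcal{O}(\cdot)$, or the $(M-1)C_x$ term must be used as the paper's way of absorbing it.
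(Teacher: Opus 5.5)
Your proposal follows the paper's proof: the paper likewise inserts $f_s$ evaluated at the oracle's estimated states and bounds the input mismatch by $\mathcal{L}_{f_s} M \sup_m D_t$, with $D_t \le \rho$ taken from the Kalman-gain recursion in the proof of Theorem~\ref{thm:predicted_state_convergence}, and it identifies the remainder with $\sigma^*$ in the limit $k\to\infty$; the only difference is that you split off the optimization error $\|\theta_s^k-\theta_s^*\|$ explicitly (which needs convergence of the server iterates and Lipschitzness of $f_s$ in $\theta_s$), whereas the paper silently folds it into that limit. As you noticed, Definition~\ref{def:irreducible_global_error} uses the client inputs $\{(\hat{x}_m)_c\}$, so your first split already gives the bound $\sigma^*$ outright, and the remainder $\mathbb{E}\|f_s(\{(\hat{x}_m)_o\};\theta_s^*)-f(\hat{x}_o)\|$ in your second decomposition (as in the paper's) is not literally $\sigma^*$ but is at most $\sigma^*+\mathcal{L}_{f_s}M\rho$ by one more triangle inequality, which is harmless inside the $\mathcal{O}(\cdot)$.
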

The result presented in Theorem \ref{thm:global_server_convergence} implies that the \textbf{understanding of interdependencies by the server model converges to that of a centralized oracle} up to a constant depending on the parameters of the state space models, and the irreducible error of the global server model. 

%% file: PrivacyAnalysis.tex
In this section we analyze how communications during training and inference are protected to prevent revealing sensitive client data. During training, the communicated local and augmented client models' states can be perturbed as \( {(\tilde{x}_{m}^{t-1})}_c = {(\hat{x}_{m}^{t-1})}_c + \mathcal{N}(0, \sigma_c^2) \) and \( {(\tilde{x}_{m}^t)}_a = {(x_{m}^t)}_a + \mathcal{N}(0, \sigma_a^2) \).
\begin{theorem}[\textbf{Client-to-Server Communication}]\label{thm:client_to_server}
At each time step \( t \), the mechanisms by which client \( m \) sends \({(\tilde{x}_{m}^{t-1})}_c\) (state communication) and \({(\tilde{x}_{m}^t)}_a\) (augmented communication) to the server satisfy \((\varepsilon, \delta)\)-differential privacy w.r.t. the client’s local data \( y_m^t \). This is achieved by adding Gaussian noise with standard deviations:
\(
    \sigma_c \geq \frac{2 \|\Delta K_m\| (\mathcal{L}_{h_m} C_x + \sup_t \|r_m^t\|) \sqrt{2 \ln(1.25 / \delta_c)}}{\varepsilon_c}, 
    \sigma_a \geq \frac{2 (\mathcal{L}_{f_m} C_x + \mathcal{L}_{\theta_m} \sup_t \|y_m^t\|) \sqrt{2 \ln(1.25 / \delta_a)}}{\varepsilon_a}.
\)
The total privacy budget is \( \varepsilon = \varepsilon_c + \varepsilon_a \) and \( \delta = \delta_c + \delta_a \).
\end{theorem}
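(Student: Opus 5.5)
The plan is to apply the classical Gaussian mechanism twice and then combine the two guarantees by basic sequential composition. The Gaussian mechanism says that releasing $g(y) + \mathcal{N}(0,\sigma^2 I)$ is $(\varepsilon,\delta)$-differentially private whenever $\sigma \geq \Delta_2 g \sqrt{2\ln(1.25/\delta)}/\varepsilon$, where $\Delta_2 g = \sup_{y,y'}\|g(y)-g(y')\|$ is the $\ell_2$-sensitivity over neighbouring inputs. The whole proof therefore reduces to bounding the sensitivities of the two released maps $y_m^t \mapsto (\hat{x}_m^{t-1})_c$ and $y_m^t \mapsto (x_m^t)_a$. We treat the previous estimates as fixed, since they were already released or are public, and we take neighbouring datasets to be any two values $y_m^t, y_m^{t\prime}$ in the bounded data domain. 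A bound of the form $\Delta_2 \le 2\sup\|\cdot\|$ then follows from the triangle inequality, which is where the factor $2$ in both thresholds comes from.

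\textbf{State communication.} We write the EKF update (eq.~\ref{clientmodel_update}) as $(\hat{x}_m)_c = (x_m)_c + K_m^c r_m$. Only the correction term depends on the data, and we compare it against the oracle's correction as reference. This allows the gain entering the sensitivity to be taken as $\Delta K_m$, i.e.\ the client-specific deviation that is not already explained by the centralized oracle (the same decomposition used in Theorem~\ref{thm:predicted_state_convergence}). The residual satisfies $\|r_m\| \le \|y_m\| + \|h_m(x_m)\|$. Using Lipschitzness of $h_m$ (Assumption~\ref{ass:lipschitz_smooth}) and the state bound $C_x$ (Assumption~\ref{ass:bounded_state_space}), the data-dependent correction is bounded by $\|\Delta K_m\|(\mathcal{L}_{h_m}C_x + \sup_t\|r_m^t\|)$. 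Doubling this gives $\Delta_2 \le 2\|\Delta K_m\|(\mathcal{L}_{h_m} C_x + \sup_t\|r_m^t\|)$, and plugging into the Gaussian mechanism yields the stated $\sigma_c$ with budget $(\varepsilon_c,\delta_c)$.

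\textbf{Augmented communication.} Next we use $(x_m)_a = f_m((\hat{x}_m)_c + \phi_m(y_m;\theta_m))$. By Lipschitzness of $f_m$, its contribution is controlled by $\mathcal{L}_{f_m}$ times the bounded state, giving the $\mathcal{L}_{f_m}C_x$ term. The learned augmentation $\phi_m$ contributes a term proportional to the size of the input. Here we use the Lipschitz-type bound $\|\phi_m(y)\|\lesssim \mathcal{L}_{\theta_m}\|y\|$: we invoke the assumed Lipschitz constant $\mathcal{L}_{\theta_m}$ and normalize $\phi_m(0)=0$, treating it as a uniform growth bound. This gives $\sup\|(x_m)_a\| \le \mathcal{L}_{f_m}C_x + \mathcal{L}_{\theta_m}\sup_t\|y_m^t\|$, hence $\Delta_2 \le 2(\mathcal{L}_{f_m}C_x + \mathcal{L}_{\theta_m}\sup_t\|y_m^t\|)$, which produces the stated $\sigma_a$.

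\textbf{Composition and main obstacle.} Both messages are functions of the same $y_m^t$. Basic composition of an $(\varepsilon_c,\delta_c)$- and an $(\varepsilon_a,\delta_a)$-DP mechanism gives $(\varepsilon_c+\varepsilon_a,\delta_c+\delta_a)$-DP. The expected obstacle is the sensitivity step rather than the composition. First, Assumption 6.2 gives Lipschitzness of $\phi_m$ in $\theta_m$, not in $y$, so the $\mathcal{L}_{\theta_m}\sup\|y\|$ bound must be justified, either by a compatible joint-Lipschitz reading or by simply assuming it. Second, justifying $\|\Delta K_m\|$ rather than $\|K_m^c\|$ in $\sigma_c$ requires taking the oracle-aligned component as non-private reference information. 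I would state both points explicitly as modeling conventions. I would also note that using the full $\|K_m^c\|$ would give a valid, more conservative bound if one prefers not to adopt them.
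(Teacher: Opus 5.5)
Your route is the paper's route. You bound the norm of each released quantity and double it via the triangle inequality to get the $\ell_2$-sensitivity, which usefully explains the factor $2$ that the paper inserts without comment. You then apply the Gaussian mechanism to each message and add the budgets by basic composition. The paper's proof likewise just asserts the two sensitivity bounds.

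The trouble is that the $\sigma_c$ step is a genuine gap, shared with the paper, and the ``modeling convention'' you propose does not close it. The released quantity is $(\hat{x}_m)_c = (x_m)_c + K_m^c r_m$. With the predicted state held fixed, two neighbouring inputs change it by exactly $K_m^c(y_m - y_m')$, so the sensitivity is governed by $\|K_m^c\|$. The oracle's gain never enters the client's computation. Moreover, $\text{Extract}_m(K^o)\, r_m$ is itself a function of $y_m$ that the server never sees, so it cannot be treated as non-private reference information and subtracted off. The decomposition in Theorem~\ref{thm:predicted_state_convergence} bounds the discrepancy between two different estimators, not the variation of one estimator across neighbouring inputs. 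Concretely, nothing prevents $\Delta K_m = 0$ with $K_m^c \neq 0$. An example is decoupled clients with linear measurement maps and matched initialization, where the oracle filter splits into the $M$ local filters. In that case the statement allows $\sigma_c = 0$, i.e.\ a deterministic, data-dependent release, which is not $(\varepsilon,\delta)$-differentially private for any $\delta<1$. So no argument can deliver the $\|\Delta K_m\|$ version of $\sigma_c$. What your argument does prove is your fallback with $\|K_m^c\|$ in its place. There the $\mathcal{L}_{h_m}C_x$ term is in fact slack: $h_m((x_m)_c)$ cancels between neighbours, so $2\|K_m^c\|\sup_t\|r_m^t\|$ already suffices.

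Two smaller points must also be repaired even for that corrected version.
\begin{itemize}
\item Assumption~\ref{ass:bounded_state_space} only bounds $\mathbb{E}\|(x_m^t)_o\|$, an expectation of the \emph{oracle's} state. Sensitivity is a worst-case quantity over the client's own states, so you need an almost-sure bound on $(x_m^t)_c$ and $(\hat{x}_m^t)_c$. You also need the Lipschitz-in-$y$ bound on $\phi_m$ that you already flagged.
\item You use the form $(x_m)_a = f_m\bigl((\hat{x}_m)_c + \phi_m(y_m;\theta_m)\bigr)$, which is the one consistent with the paper's definition of the augmented model. Lipschitzness of $f_m$ then gives $\|(x_m)_a\| \le \|f_m(0)\| + \mathcal{L}_{f_m}\bigl(C_x + \mathcal{L}_{\theta_m}\sup_t\|y_m^t\|\bigr)$. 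So the $\phi_m$ term carries an extra factor $\mathcal{L}_{f_m}$, and your derivation yields the stated $\sigma_a$ only after adding $f_m(0)=0$ and $\mathcal{L}_{f_m}\le 1$. The condition $\mathcal{L}_{f_m}\le 1$ is implied by the stability condition of Theorem~\ref{thm:predicted_state_convergence}, but it is not assumed in Theorem~\ref{thm:client_to_server}.
\end{itemize}
The paper reaches its constant by writing $(x_m^t)_a = f_m\bigl((\hat{x}_m^{t-1})_a\bigr) + \phi_m(y_m^t;\theta_m)$ in its proof. That additive form contradicts its own model equation.
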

Theorem \ref{thm:client_to_server} says that the proprietary and augmented client states sent to the server can be perturbed with noise proportional to their sensitivities, preventing leakage of client data. When the server communicates back to the clients, the gradients \( g_m^t := 
 \color{blue} \big(\nabla_{{(\hat{x}^t_m)}_a} L_s\big)\) can be perturbed as:
\(
\tilde{g}_m^t = \text{clip}(g_m^t, C_g) + \mathcal{N}(0, \sigma_g^2),
\)
where \( \text{clip}(g_m^t, C_g) \) scales \( \lVert g_m^t \rVert_2 \) to \( C_g \) if \(\|g_m^t\|_2 > C_g\).
\begin{theorem}[\textbf{Server-to-Client Communication}]\label{thm:server_to_client}
At each time step \( t \), the mechanism by which the server sends \(\tilde{g}_m^t\) (gradient updates) to client \( m \) satisfies \((\varepsilon,  \delta)\)-differential privacy w.r.t. any single client’s data. The sensitivity of the clipped gradient is bounded by \( C_g := \max_{t} \|g_m^t\|_2 \), and Gaussian noise with standard deviation:
\(
\sigma_g \geq \frac{2 C_g \sqrt{2 \ln(1.25 / \delta)}}{\varepsilon},
\)
is added to ensure privacy. 
\end{theorem}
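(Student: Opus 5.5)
The plan is to reduce the claim to the standard Gaussian mechanism. For a query $q$ with $\ell_2$-sensitivity $\Delta_2 q$, releasing $q + \mathcal{N}(0,\sigma^2 I)$ is $(\varepsilon,\delta)$-DP whenever $\sigma \geq \Delta_2 q \sqrt{2\ln(1.25/\delta)}/\varepsilon$, for $\varepsilon < 1$. The released quantity here is the server gradient $g_m^t = \nabla_{(\hat{x}^t_m)_a} L_s$, viewed as a function of the client data entering through the states. So the work is to bound the sensitivity of the clipped gradient $\text{clip}(g_m^t, C_g)$ under a change of any single client's data and then substitute into the Gaussian mechanism bound.

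\textbf{Step 1: bound the norm of the released vector.} By construction of the clipping operator, $\|\text{clip}(g_m^t, C_g)\|_2 \leq C_g$ for every input, independently of the data.

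\textbf{Step 2: bound the sensitivity.} Take two neighboring datasets differing in one client's data (equivalently one $y_n^t$). The two clipped gradients $\bar g, \bar g'$ both lie in the ball of radius $C_g$. The triangle inequality then gives $\|\bar g - \bar g'\|_2 \leq 2C_g$, so $\Delta_2 \leq 2C_g$. Since the statement takes $C_g := \max_t \|g_m^t\|_2$, clipping is inactive on the realized gradients and the released value equals $g_m^t$ plus noise. The bound $2C_g$ still holds uniformly, because it uses only the norm constraint and not the structure of $L_s$. This worst-case bound accounts for the factor $2$ in the stated $\sigma_g$.

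\textbf{Step 3: apply the mechanism.} Substituting $\Delta_2 = 2C_g$ into the Gaussian mechanism gives $(\varepsilon,\delta)$-DP for $\sigma_g \geq 2C_g\sqrt{2\ln(1.25/\delta)}/\varepsilon$. The noise is isotropic in the dimension of $g_m^t$, namely $P_m$. Post-processing invariance then covers the client's subsequent local use of $\tilde g_m^t$ in the $\theta_m$ update, so the guarantee transfers to everything downstream at that time step.

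\textbf{Main obstacle.} The main difficulty is the subtle point in Step 2: defining $C_g$ as a data-dependent maximum makes the clipping threshold itself depend on the data, and strictly speaking this breaks the sensitivity argument. I would handle it by treating $C_g$ as a fixed public constant, chosen as any a priori upper bound on $\max_t\|g_m^t\|_2$. Such a bound exists under the bounded-state Assumption~\ref{ass:bounded_state_space} and Lipschitzness of $f_s$, since $g_m^t = 2((x_m^t)_a - (x_m^t)_s)$ up to sign and is bounded by a constant in $C_x$, $\mathcal{L}_{f_s}$ and $\mathcal{L}_{f_m}$. The rest of the argument is then unchanged. Composition across time steps is not needed, since the claim is per time step.
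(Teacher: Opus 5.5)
Your proposal is correct and follows the paper's route: clip, bound the $\ell_2$-sensitivity through the norm constraint, and invoke the Gaussian mechanism. Your triangle-inequality bound $\Delta_2 \le 2C_g$ is what actually justifies the factor $2$ in $\sigma_g$, whereas the paper's proof states the sensitivity as $\Delta_g \le C_g$ (a norm bound, not a sensitivity bound) and inserts the $2$ without comment. Your points about needing a fixed public $C_g$ and $\varepsilon<1$ are genuine improvements; one small overreach is the claim that Assumption~\ref{ass:bounded_state_space} yields an a priori bound on $\|g_m^t\|_2$, since that assumption only bounds $\mathbb{E}\|(x_m^t)_o\|$, but this is inessential, as clipping at any fixed public level already gives sensitivity at most $2C_g$.
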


Theorem \ref{thm:server_to_client} guarantees that gradients sent by the server can be perturbed, protecting data of any single client. Similar to training, during inference, the binary anomaly flags can be perturbed with Bernoulli noise, resulting in Theorem \ref{thm:privacy_inferencing}. 
\begin{theorem}[\textbf{Privacy during Inferencing}]\label{thm:privacy_inferencing}
During inference, the mechanism by which each client communicates \(\tilde{Z}_c^t\), and \(\tilde{Z}_a^t\) satisfies \(\varepsilon\)-differential privacy w.r.t. the client’s residuals. This is achieved through randomized response with probabilities \( p_c = \frac{e^{\varepsilon_c}}{1 + e^{\varepsilon_c}} \) and \( p_a = \frac{e^{\varepsilon_a}}{1 + e^{\varepsilon_a}} \).
\end{theorem}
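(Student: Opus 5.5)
The plan is to treat each flag as a single binary output and apply the classical randomized response analysis. Then the two flags are combined by basic sequential composition. Fix a client and a time step $t$. The flag $Z_c^t = \mathbb{1}\{d_M^2(r_c^t) > \tau_c\}$ is a deterministic function of the residual $r_c^t$. The privatized flag $\tilde{Z}_c^t$ is produced by the standard mechanism: report $Z_c^t$ with probability $p_c = e^{\varepsilon_c}/(1+e^{\varepsilon_c})$, and report $1-Z_c^t$ with probability $1-p_c = 1/(1+e^{\varepsilon_c})$. The same construction defines $\tilde{Z}_a^t$ from $Z_a^t$ with $p_a$.

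\textbf{Step 1 (single flag).} Take two neighboring residual inputs $r, r'$. Call them neighboring whenever they may differ in any way, since the sensitivity of a binary flag is at most $1$ regardless of how different the residuals are. Their flags $Z, Z' \in \{0,1\}$ either coincide or differ. If they coincide, the output distributions are identical and the likelihood ratio is $1$. If they differ, then for any output $o \in \{0,1\}$ the ratio
\[
\frac{P(\tilde{Z} = o \mid r)}{P(\tilde{Z} = o \mid r')}
\]
is either $p_c/(1-p_c) = e^{\varepsilon_c}$ or its reciprocal. Hence the ratio lies in $[e^{-\varepsilon_c}, e^{\varepsilon_c}]$ for every output, which is exactly $\varepsilon_c$-differential privacy of $\tilde{Z}_c^t$ with respect to $r_c^t$. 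The identical argument gives $\varepsilon_a$-DP for $\tilde{Z}_a^t$ with respect to $r_a^t$. Because the flags are deterministic functions of the residuals, post-processing is not needed here: the bound holds directly for the mechanism viewed as a function of the residuals.

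\textbf{Step 2 (composition).} The client releases the pair $(\tilde{Z}_c^t, \tilde{Z}_a^t)$, and the two coins are flipped independently. When a single underlying data change perturbs both residuals, the joint likelihood ratio factorizes into the product of the two per-flag ratios. It is therefore bounded by $e^{\varepsilon_c + \varepsilon_a}$, so the released pair is $\varepsilon$-DP with $\varepsilon = \varepsilon_c + \varepsilon_a$. This matches the budget accounting of Theorem~\ref{thm:client_to_server}. If the residuals are instead viewed as separate protected quantities, each flag's guarantee holds separately and the conclusion only gets stronger. Any server-side use of the flags, such as the lookup in Table~\ref{tab:primary_secondary_anomalies}, is post-processing and inherits the guarantee.

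\textbf{Main obstacle.} The only delicate point is fixing the neighboring relation and checking that the guarantee holds for arbitrary, possibly far-apart, residuals. Randomized response handles this uniformly because its worst case is attained exactly when the flags differ, which bounds the ratio for any pair of inputs. A secondary point is the time horizon. The statement is per time step, so I would explicitly restrict to a single $t$ and remark that releases over $T$ steps compose to $T\varepsilon$ by the same argument.
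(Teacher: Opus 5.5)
Your proposal is correct and follows essentially the same route as the paper: the standard randomized-response likelihood ratio $p/(1-p)=e^{\varepsilon}$ for each flag, followed by adding the two budgets to obtain $\varepsilon=\varepsilon_c+\varepsilon_a$. Your version is slightly more careful than the paper's, because you check both output values and justify the budget addition through the independence of the two coins, where the paper simply asserts it.
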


%% file: Experiments.tex
\subsection{Synthetic Datasets}
 \textbf{Data Generation.} We simulate an \( M \)-client non-linear state-space model where the root cause client evolves as: \( x_m^t = f_m(x_m^{t-1}) + \epsilon_m \), and clients exhibiting propagated effects follow: \( x_n^t = f_n(x_n^{t-1}) + g_{dep.}^n(x_m^{t-1}) + \epsilon_n \), with \( \epsilon_m \sim \mathcal{N}(0, Q_m) \hspace{0.1cm} \forall m \). Directed interdependencies are captured in \( g_{dep.}^n(x_m^{t-1}) \). Raw data are mapped via \( y_m^t = h(x_m^t) + \zeta_m \), where \( \zeta_m \sim \mathcal{N}(0, R_m) \hspace{0.1cm} \forall m \).  For all \( M \) clients, \( f_m(.) \) and \( g_{dep.}^m(.) \) are modeled as LSTMs, 
 while \( h_m(.) \) are implemented as fully connected networks. The ``\textbf{\textit{Architectural details}}'' of $f_m, h_m, g_{dep}$ are in the Appendix. Except for “\textit{Scalability studies}”, experiments use \( M = 2 \), \( D_m = 4 \), and \( P_m = 2 \hspace{0.1cm} \forall m \in \{1,2\} \) to ensure a controlled and interpretable evaluation. The true network structure follows \( C_1 \to C_2 \), with \( C_1 \) influencing \( C_2 \). 
 Due to space limitations, further details on ``\textbf{\textit{Experimental Settings}}'' are provided in the Appendix. 

\textbf{Learning Interdependencies.} After training, we assess the framework's ability to learn cross-client interdependencies. Fig. \ref{fig:synthetic_correlation} shows state correlations in synthetic datasets, where augmented client models and the server model closely match the centralized oracle, differing significantly from proprietary client models. This highlights their effectiveness in capturing cross-client interdependencies. 

\begin{figure}[H]
    \centering
  \subfloat{%
       \includegraphics[width=0.24\columnwidth]{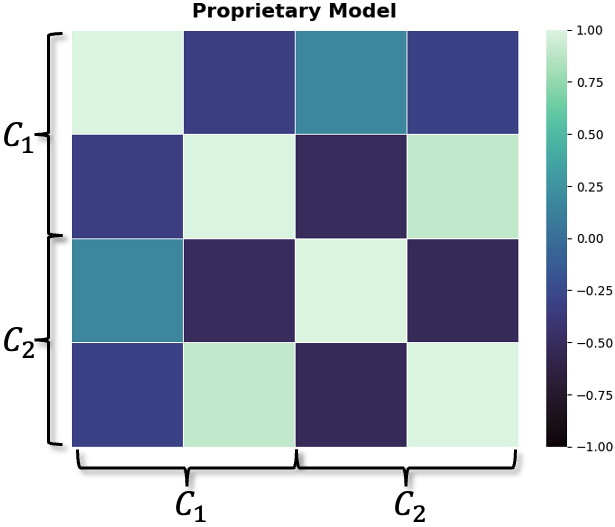}}
      \subfloat{%
        \includegraphics[width=0.24\columnwidth]{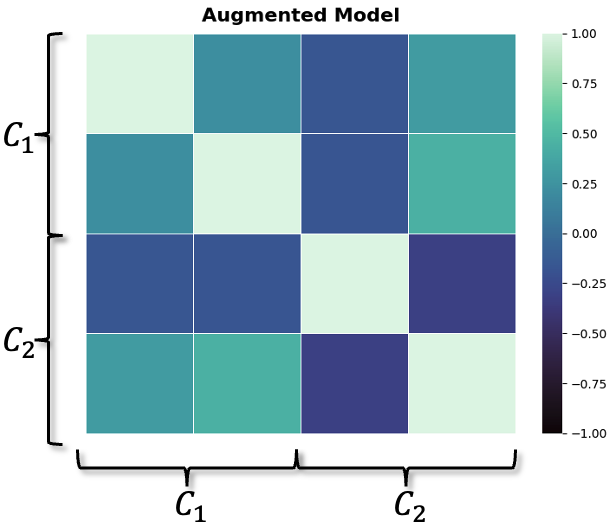}}
  \subfloat{%
       \includegraphics[width=0.24\columnwidth]{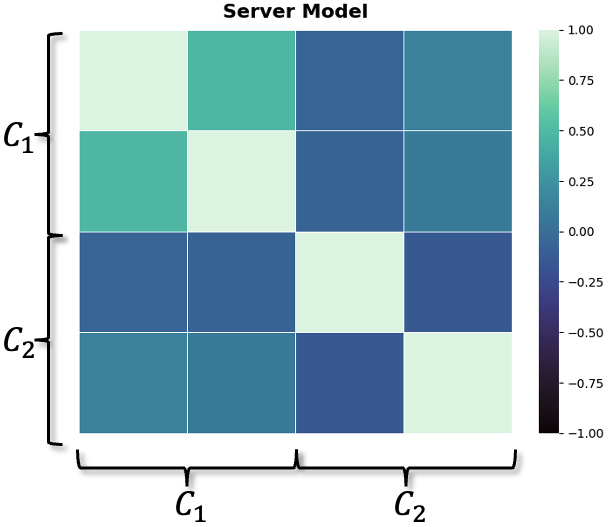}} 
       \subfloat{%
       \includegraphics[width=0.24\columnwidth]{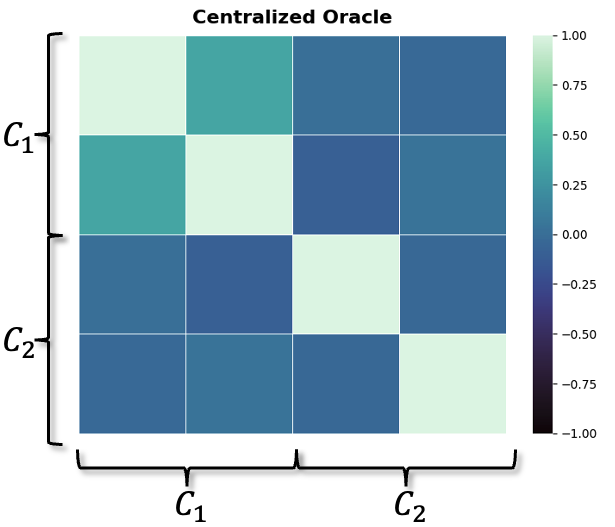}}
  \caption{Correlation in the states of clients $C_1$, and $C_2$ in their proprietary model (left), augmented model (second to the left), server model (second to the right), and centralized oracle (right)}
  \label{fig:synthetic_correlation} 
\end{figure}
\textbf{Baselines.} We compare against the following baselines: (1) \textit{Only Proprietary Model:} EKF-based proprietary client models without local ML augmentation or a global server model, evaluating the limitations relying solely on local data. (2) \textit{Without Proprietary Model:} Clients use end-to-end ML instead of EKF-based proprietary models, testing the performance without a domain-specific proprietary model. (3) \textit{Pre-trained Clients \cite{Ma2023}:} Clients pre-train augmented models independently and share state representations for training a global server model, assessing the impact of federated interdependency learning without continuous server feedback. (4) \textit{Centralized Oracle:} An EKF with access to all client states and raw data, serving as an upper bound with known interdependencies.

\textbf{Evaluation Metric.} Anomaly detection is evaluated using \textbf{(1)} \textit{\(ARL_0\)}, which measures the expected time passed before a false alarm under nominal conditions, and \textbf{(2)} \textit{\(ARL_1\)}, which quantifies the average time to detect a true anomaly. RCA performance is assessed with \textbf{(1)} \textit{Precision}, the proportion of correctly identified RCs, and \textbf{(2)} \textit{Recall}, the proportion of true RCs successfully identified.

\textbf{Anomaly Detection.}  The inference step for all four baselines follows the procedure in Section \ref{section:inferencing}, utilizing anomaly flags for RCA. Thresholds \(\tau_c\) and \(\tau_a\) are computed statistically using nominal data. The anomaly detection for synthetic datasets is given in Table \ref{tab:synthetic:combined_performance}. The \textit{Only Proprietary Model}, which does not use ML or encode any interdependencies, achieves the highest $ARL_0$ (fewest false alarms) but suffers from the highest $ARL_1$ (slowest detection). Despite being decentralized, our approach achieves the close enough performance as the \textit{Centralized Oracle}, which has direct access to interdependencies $g_{dep.}$. Furthermore, our approach provides same $ARL$ values as \textit{Pre-trained Clients} and \textit{Without Proprietary Model} baselines, suggesting the dominance of ML in detection. 
\begin{table}[H]
\centering
\caption{Performance of anomaly detection and root cause analysis approach discussed in Section \ref{section:inferencing}}
\begin{tabular}{|c|cc|ccc|}
\hline
 & \multicolumn{2}{c|}{\textbf{Anomaly Detection}} & \multicolumn{3}{c|}{\textbf{Root Cause Analysis (RCA)}} \\ \cline{2-6}
    \textbf{Model Used}               & \textbf{$ARL_0$} & \textbf{$ARL_1$} & \textbf{Precision} & \textbf{Recall} & \textbf{$F_1$ score} \\ \hline
Only Proprietary Model     & 36.88 & 157.8  & \multicolumn{3}{c|}{\textit{Cannot perform RCA as $Z_a$ is absent}} \\
Without Proprietary Model  & 27.82 & 112.38 & \multicolumn{3}{c|}{\textit{Cannot perform RCA as $Z_c$ is absent}} \\
Pre-trained Clients        & 27.82 & 112.38 & 0.70 & 0.49 & 0.576 \\
\textbf{Our Framework}      & 27.82 & 112.38 & 0.73 & 0.57 & 0.640 \\
Centralized Oracle         & 22.9  & 112.75 & \multicolumn{3}{c|}{\textit{Trivial with known interdependencies}} \\ \hline
\end{tabular}
\label{tab:synthetic:combined_performance}
\end{table}

\textbf{Root Cause Analysis.} Once the anomalies are detected, we perform RCA on the entire system. Table \ref{tab:synthetic:combined_performance} provides the performance for our approach and \textit{Pre-trained Clients}. Since \textit{Only Proprietary Client Model}, and \textit{Without Proprietary Client Model} \textbf{do not have $Z_a$, and $Z_c$} respectively, we \textbf{cannot utilize our RCA approach for them}. Furthermore, \textit{Centralized Oracle} has access to interdependencies, making it trivial for RCA. Our approach achieves high precision, without direct access to interdependencies. However, its low recall suggests potential gaps in identifying all root causes, warranting further refinement in our federated RCA approach. Despite same anomaly detection performance, our method marginally outperforms \textit{Pre-trained Clients}, emphasizing the advantages of server-to-client communication in RCA.

\begin{figure}[H]
    \centering
  \subfloat{%
       \includegraphics[width=0.325\columnwidth]{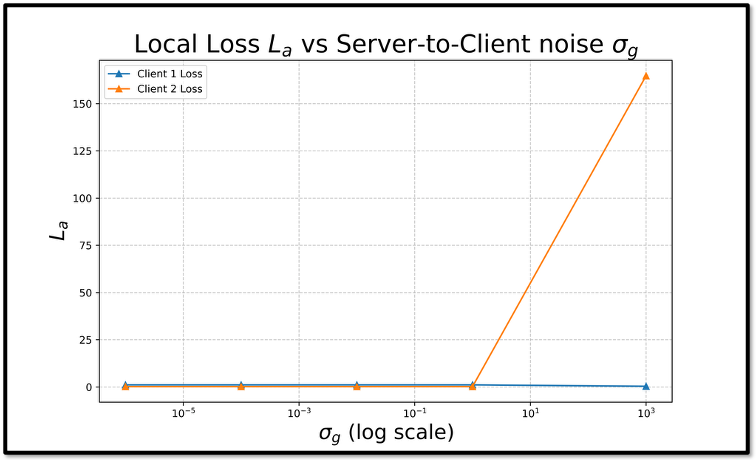}}
      \subfloat{%
        \includegraphics[width=0.325\columnwidth]{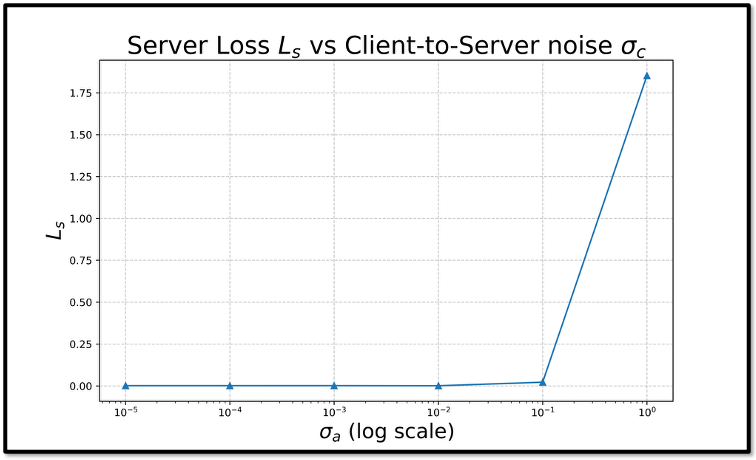}}
  \subfloat{%
       \includegraphics[width=0.325\columnwidth]{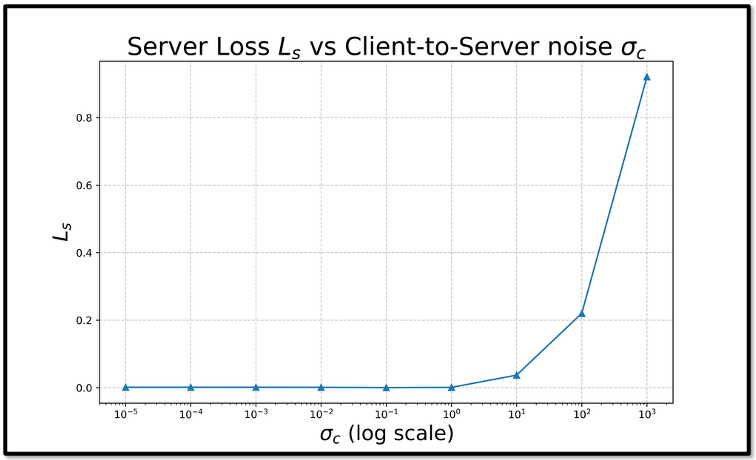}} 
  \caption{Loss under Gaussian noise in (a) server-to-client and (b-c) client-to-server communications.}
  \label{fig:privacy} 
\end{figure}
\textbf{Privacy Analysis.} We add Gaussian noise with varying standard deviation to communications: \textbf{(i)} server-to-client and \textbf{(ii)} client-to-server (Fig.~\ref{fig:privacy}). We observe that server and client losses remain stable until impractically high noise levels ($10^1$ and $10^5$ respectively), demonstrating training robustness under privacy constraints. We also investigate performance evaluation of \textbf{(i)} anomaly detection and \textbf{(ii)} root cause analysis (Fig.~\ref{fig:performance}) under randomized anomaly flags. Metrics remain stable with varying $p_c$ (i.e., noise added to $Z_c$) but show sensitivity to changes in $p_a$ (i.e., noise in $Z_a$).
\begin{figure}
    \centering
  \subfloat{%
       \includegraphics[width=0.24\textwidth]{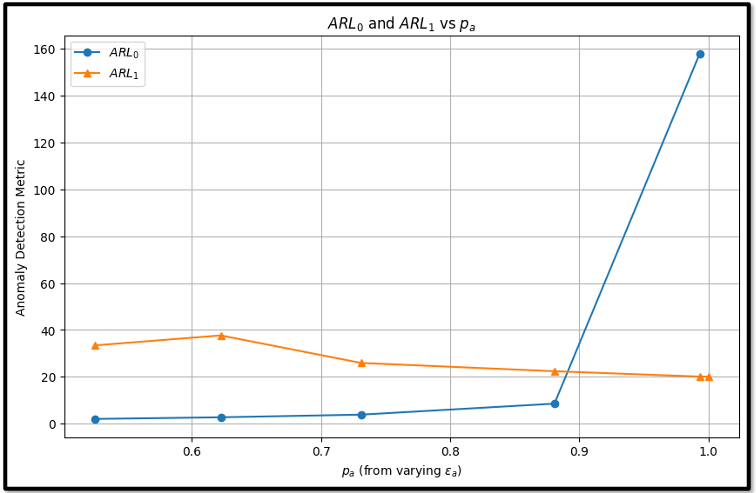}}
      \subfloat{%
        \includegraphics[width=0.24\textwidth]{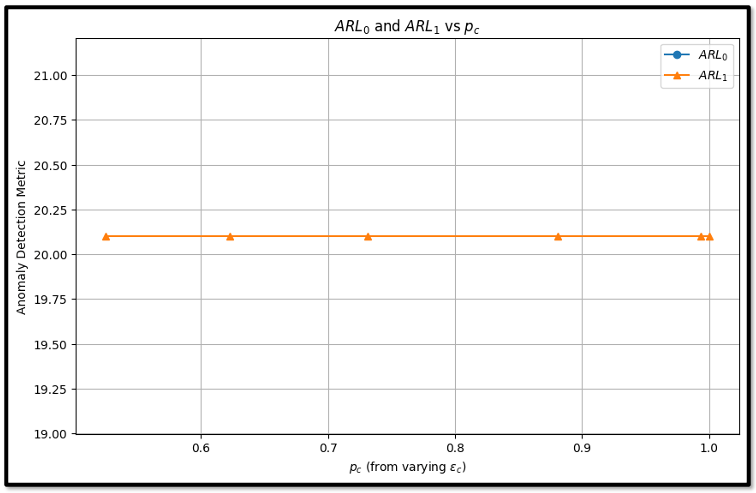}}
        \subfloat{%
       \includegraphics[width=0.236\textwidth]{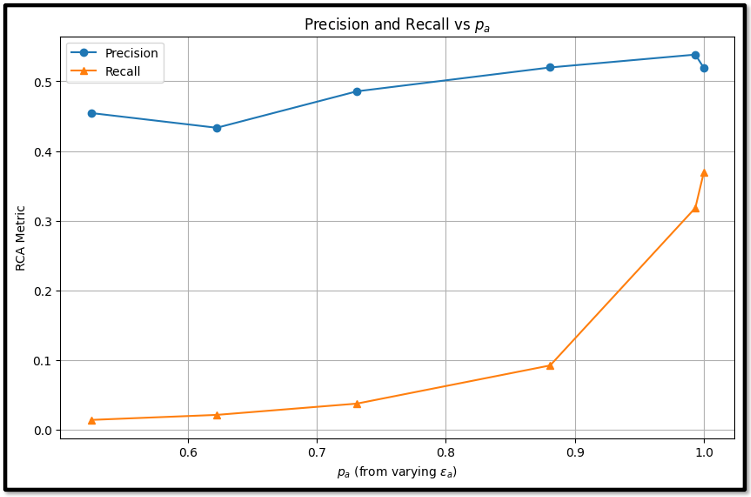}}
      \subfloat{%
        \includegraphics[width=0.24\textwidth]{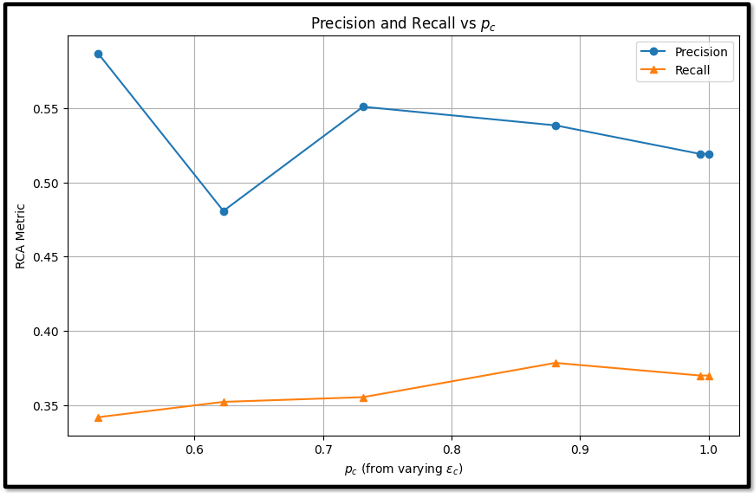}}
  \caption{Anomaly detection performance showing (a) sensitivity to $p_a$ and (b) robustness to $p_c$. RCA performance under randomized anomaly flags showing sensitivity to (c) $p_a$ and (d) $p_c$}
  \label{fig:performance} 
\end{figure}

\textbf{Scalability Studies.}
We assess the framework’s scalability by analyzing the final training loss as a function of communication cost per round. We vary the number of clients (\(M \in \{2, 4, 8, 16\}\)) keeping the state and data dimensions fixed. Table \ref{tab:synthetic_scalability} gives the server loss ($L_s$) vs. comm. overhead. 
\begin{table}[H]
\centering
\caption{Scalability Studies}
\renewcommand{\arraystretch}{1.2}
\begin{tabular}{@{}|cc|cc|@{}}
\hline
\textbf{Comm. Overhead} & \textbf{Server Loss} & \textbf{Comm. Overhead} & \textbf{Server Loss} \\
\textbf{(bytes/round)} & \textbf{($L_s$)} & \textbf{(bytes/round)} & \textbf{($L_s$)} \\ \hline
32  & 0.0305 & 128 & 0.0263 \\
64  & 0.0129 & 256 & 0.0412 \\
\hline
\end{tabular}
\label{tab:synthetic_scalability}
\end{table}

\textbf{Threshold Sensitivity.} All previous experiments were conducted with $\tau_a$ set at the 95th percentile of Mahalnobis distance. The framework's sensitivity to different $\tau_a$
is studied in the Appendix.

\subsection{Real-world Industrial Dataset}

\textbf{Description.} We evaluate our framework on a real-world industrial cybersecurity dataset: \textit{Hardware-In-the-Loop Augmented Industrial Control System (HAI)} \cite{github_HAI}. HAI emulates steam-turbine and pumped-storage hydropower generation, comprising four processes— \textit{Boiler}, \textit{Turbine}, \textit{Water Treatment}, and \textit{HIL Simulation}, each treated as a separate client. The dataset includes cyberattacks that introduce anomalies in a single process (client), which propagate due to inter-process dependencies.

We specify the ``\textit{\textbf{Experimental Settings}}'', and ``\textbf{\textit{Pre-processing Steps}}'' in the Appendix. 

\textbf{Training.} We train the dataset on the testbed's nominal operation (i.e., no cyberattacks) to learn cross-client interdependencies. Results (provided in the Appendix) depict that the framework learned a correlation between the \textit{Boiler}, \textit{Turbine}, \textit{Water Treatment} processes without moving the sensor data. 

\textbf{Inference.} The test dataset includes cyberattack-based anomalies in the testbed's operation. The ground-truth label of the anomaly, as well as documentation on the root cause, are also provided for validation. We use our methodology to detect these anomalies and identify the root cause process. Due to space limitations, we provide the results in the Appendix. 

%% file: ConclusionANDLimitations.tex

This work proposes a federated cross-client interdependency learning framework that enables RCA in decentralized industrial systems with nonlinear and heterogeneous IoT data. The approach augments unmodifiable proprietary client models with local ML components and coordinates interdependency learning through a central server. Anomalies are detected locally, and only binary flags are shared with the server for a decentralized RCA. The method is evaluated on synthetic and real-world industrial dataset, validating the claims made about interdependency learning and RCA. 

Our method has certain key limitations: RCA is performed only at the server, requiring clients to share anomaly flags to confirm root causes. The convergence analysis assumes proprietary client models follow an EKF structure, limiting generalizability to other model types. Empirical performance is sensitive to threshold, data noise, and the architecture of EKF, ML, and interdependency functions.

%% file: Appendix/Appendix.tex
\section{Additional Results}
\subsection{Details on Synthetic Datasets}
\subfile{Synthetic.tex}

\subsection{Real World Industrial Dataset}
\subfile{RealWorld.tex}

\section{Proofs}
\subfile{proofs.tex}
\newpage
\section{Pseudocode}
\subfile{pseudocode.tex}

%% file: Appendix/Synthetic.tex
\subsubsection{Experimental Settings}
We generate synthetic data using LSTM-based state transition models, dependency functions, and measurement functions. The training (and nominal phase of inference) data generation process ensures smooth variations without abrupt anomalies, maintaining consistency with real-world settings. 

The system dynamics are modeled using an LSTM-based state transition function $f_m$, which maps the previous state to the next state. Each client maintains an independent non-linear transition model with a hidden dimension of 16:
\begin{equation}
    x_{t+1} = f_m(x_t),
\end{equation}
where $f_m$ is a single-layer LSTM followed by a fully connected layer. Interdependencies among clients are captured using an LSTM-based function $g_{\text{dep}}$, which models weakened dependencies for nominal data to minimize anomaly propagation. It consists of a two-layer LSTM with 64 hidden units and a fully connected layer, with a scaling factor $\alpha = 0.5$ to reduce dependency strength:
\begin{equation}
    g_{\text{dep}}(x_t) = \alpha \cdot \tanh(W_{\text{dep}} \cdot \text{LSTM}(x_t)),
\end{equation}
where $W_{\text{dep}}$ is a learned transformation matrix. Observations are generated using a measurement function $h_m$, which maps the latent states to measurements. It consists of a two-layer feedforward network with a SELU activation in the first layer and an identity function in the second:
\begin{equation}
    y_t = h_m(x_t) + \eta_t, \quad \eta_t \sim \mathcal{N}(0, \sigma_m^2).
\end{equation}
A learnable scaling factor of 50 is applied to ensure comparability with anomalous data. To ensure smooth variations in nominal states, a sinusoidal target vector is generated:
\begin{equation}
    x_t = A_t \cdot \sin\left(\frac{2\pi t}{T_s}\right) + \epsilon_t, \quad \epsilon_t \sim \mathcal{N}(0, \sigma_s^2),
\end{equation}
where $A_t$ represents a slowly varying step function and $T_s=20$ is the step size. The system consists of $M$ clients, each with $P$ state dimensions and $D$ measurement dimensions. At each timestep $t$, the root cause client updates its state using $f_m$. Other clients update based on both $f_m$ and dependencies from $g_{\text{dep}}$. Process noise $\mathcal{N}(0, \sigma_p^2)$ is then added to simulate real-world variations. Finally Measurements are obtained through $h_m$, with additional measurement noise $\mathcal{N}(0, \sigma_m^2)$.

We introduce sustained anomalies into the synthetic dataset to evaluate the framework's ability to detect and localize anomalies. The anomalous data generation follows a structured approach to ensure consistency with real-world industrial fault patterns while maintaining the statistical properties of nominal data. Anomalies are injected into the target vector using controlled perturbations:
\begin{equation}
    x_t = A_t \cdot \sin\left(\frac{2\pi t}{T_s}\right) + \epsilon_t + \delta_t,
\end{equation}
where, $A_t$ represents smooth nominal variations, $\epsilon_t \sim \mathcal{N}(0, \sigma_s^2)$ is small stochastic noise, and $\delta_t$ is the anomaly term, which introduces sustained deviations.
Anomalies are injected periodically every five intervals, lasting for a sustained duration of $10$ timesteps. This simulates real-world scenarios where faults persist over time rather than appearing as transient spikes.

To create anomalies, a magnitude shift of $\Delta = 2.0$ is applied across all affected clients:
\begin{equation}
    \delta_t =
    \begin{cases}
        \Delta, & \text{if } t \in \mathcal{T}_{\text{anomaly}} \\
        0, & \text{otherwise}
    \end{cases}
\end{equation}
where $\mathcal{T}_{\text{anomaly}}$ represents the set of timesteps where anomalies occur. Anomalies are applied uniformly across all clients, ensuring a system-wide deviation. Each anomaly event is logged to track when and where anomalies were introduced:
\begin{equation}
    \mathcal{A} = \{(t, m) \mid t \in \mathcal{T}_{\text{anomaly}}, m \in M\},
\end{equation}
where $\mathcal{A}$ stores the affected timesteps and clients. 

\subsubsection{Threshold Sensitivity}\label{sec:ThresholdSensitivity}
In Fig. \ref{fig:synthetic:sensitivity_tau_a}, we evaluate the impact of the anomaly detection threshold $\tau_a$ by varying its percentile from $75$ to $97.5$ and measuring its effect on ARL$_0$ (false alarms), ARL$_1$ (timely detections), and RCA (precision and recall). The results indicate that while precision and recall remain relatively stable across different $\tau_a$ values, anomaly detection performance is highly sensitive to $\tau_a$. In particular, $ARL_1$ (delay in detection) exhibits an exponential increase with a higher threshold $\tau_a$. This emphasizes the challenge of selecting an optimal $\tau_a$ and mandates further investigation. 
\begin{figure}[H]
    \centering
  \subfloat{%
       \includegraphics[width=0.48\columnwidth]{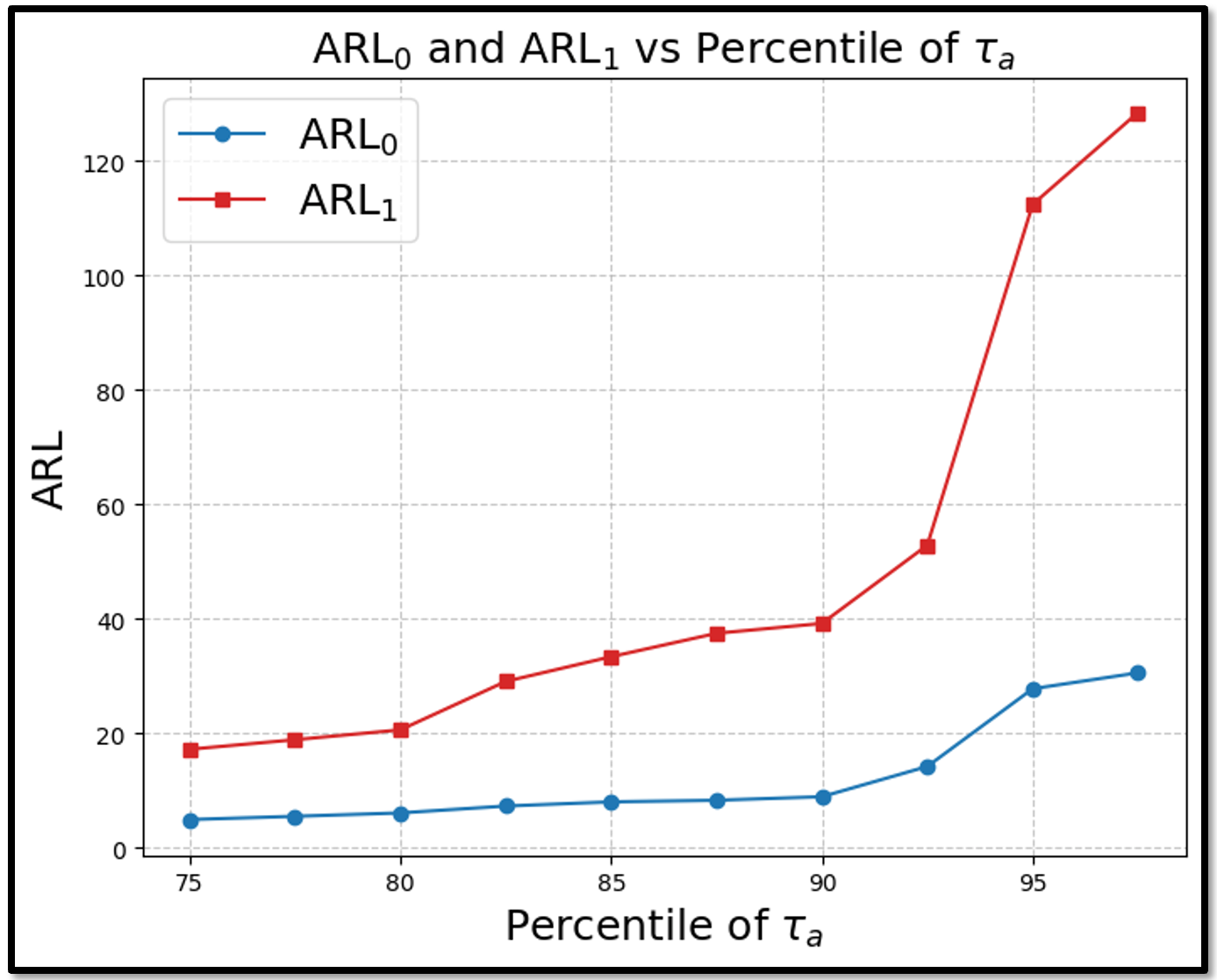}}
      \subfloat{%
        \includegraphics[width=0.475\columnwidth]{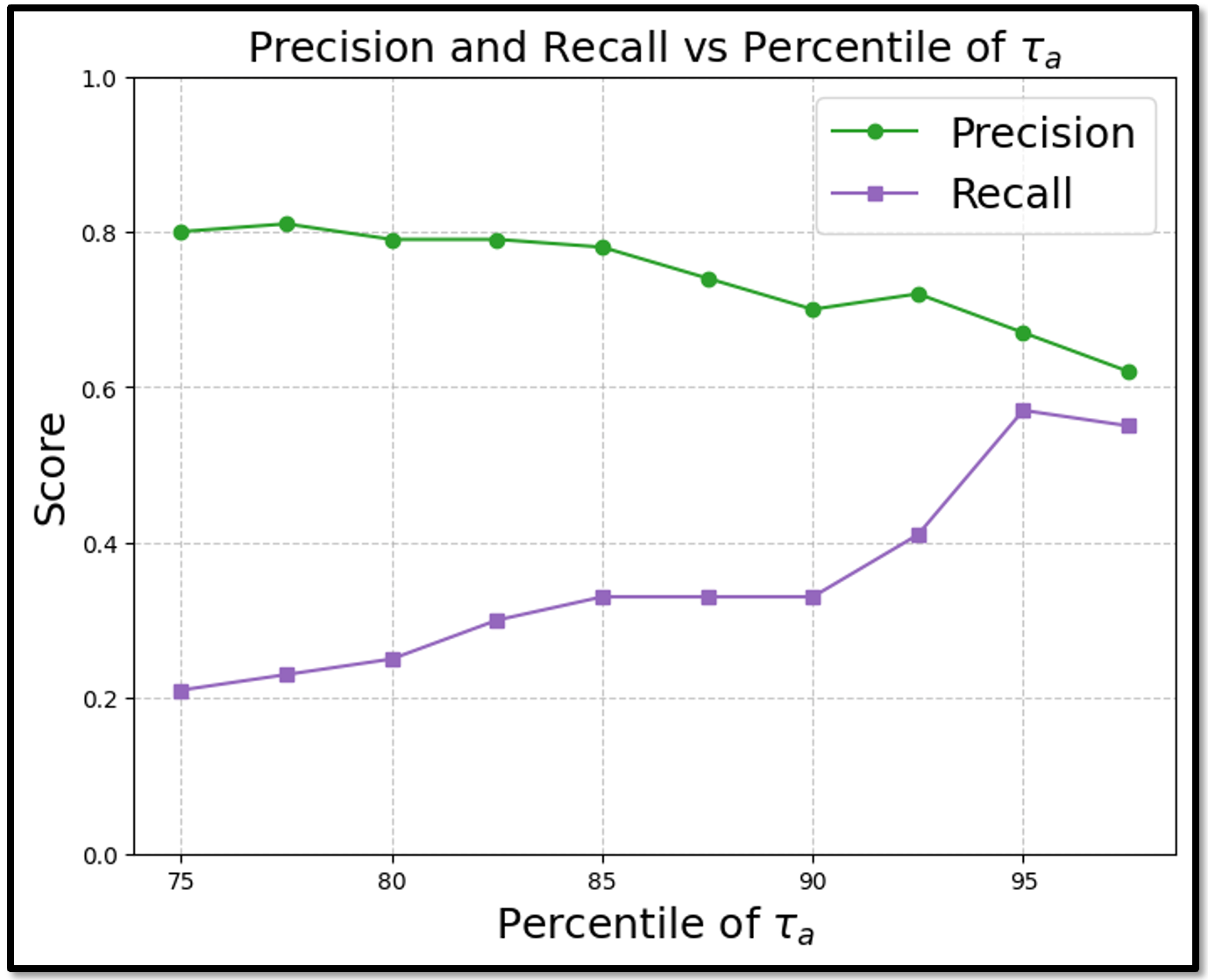}}
  \caption{Sensitivity of $ARL_0$, and $ARL_1$ (left), and Precision-Recall to threshold $\tau_a$}
  \label{fig:synthetic:sensitivity_tau_a} 
\end{figure}

%% file: Appendix/RealWorld.tex
The study employs the public HAI 23.05 cyber-physical system benchmark, which records physical sensor measurements like flow, level, pressure, temperature, vibration, and valve-position signals, etc., from three processes -- Boiler (P1), Turbine (P2), Water Treatment (P3) along with a \textit{Hardware-in-Loop} (HIL) simulation (P4) that introduces dependencies between P1, P2, and P3.

\subsubsection{Pre-processing Steps}
We retain only P1, P2, and P3, thereby removing the HIL simulation (which has only 1 sensor). Furthermore, in our study, we only utilize physical sensor measurements from P1, P2, and P3. Boolean and Integral (or \%) variables arising from control valves (CV) or set points (SP) in any of P1, P2, and P3 are not considered, as they do not directly contribute to the process dynamics. 

To place every variable on a common, dimensionless footing, we center its trajectory by the mean operating level and scale it by the corresponding standard deviation estimated exclusively from the training dataset. The resulting transformation parameters (\(\mu\) and \(\sigma\) pairs) are archived separately for P1, P2, and P3, so that any future segment can be mapped into the very same standardized feature space without recalculating statistics. All training logs are first concatenated and chronologically ordered. For the test data (inference), the stored parameters are applied to every labeled test segment, ensuring normalization and preserving unaltered timestamps.

\subsubsection{Experimental Settings}  
\textbf{Train/Test Split.} Models use the dataset's official partition (nominal operation for training, attacks for testing), trained on the first 10,000 time-steps to capture dependencies while maintaining tractability.

\textbf{Proprietary Client Model.} Each client $c\in\{\mathrm{P1},\mathrm{P2},\mathrm{P3}\}$ trains an EKF with LSTM state transition function $f_m$ (width 16, $P=2$) and MLP head based measurement function $h_m$ (SELU, width 32), optimized via Adam (lr=$10^{-3}$, 10 epochs, $L=20$, batch=64) on MSE. Inference uses $f_m,h_m$ with identity-initialized covariance and $Q,R\propto I$.

\textbf{Augmented Client Model.} Each client adds $\phi_m$ (LSTM, width 1, output $P$) predicting $\delta_t=\phi_m(y_m^t)$ for state correction ${(\hat{x}^{\,t}_{m})}_a \leftarrow {(\hat{x}^{\,t}_{m})}_c + \delta_t$, updated via proprietary model loss and server gradients.

\textbf{Global Server Model.} Server LSTM $f_s$ (width 64) maps $\mathbf{x}^{t-1}\in\mathbb{R}^{MP}$ ($M=3$,$P=2$) to predictions, updated online via Adam (lr=$10^{-3}$) with gradients $\nabla_{{\hat{x}}^{t}_{a}}{L}_s$ for client updates.

\textbf{Centralized Oracle.} EKF that uses LSTM (width 128, $P_{\text{tot}}=6$) for state transition func. and MLP head (SELU, width 4) for measurement func., trained with L=20, batch=64, 10 epochs, lr=$5\times10^{-3}$.

\textbf{Hardware.} PyTorch 3 on CPU-only Google Colab (13GB RAM), without dropout, or grad. clipping.

\begin{figure}
    \centering
  \subfloat{%
       \includegraphics[width=0.48\columnwidth]{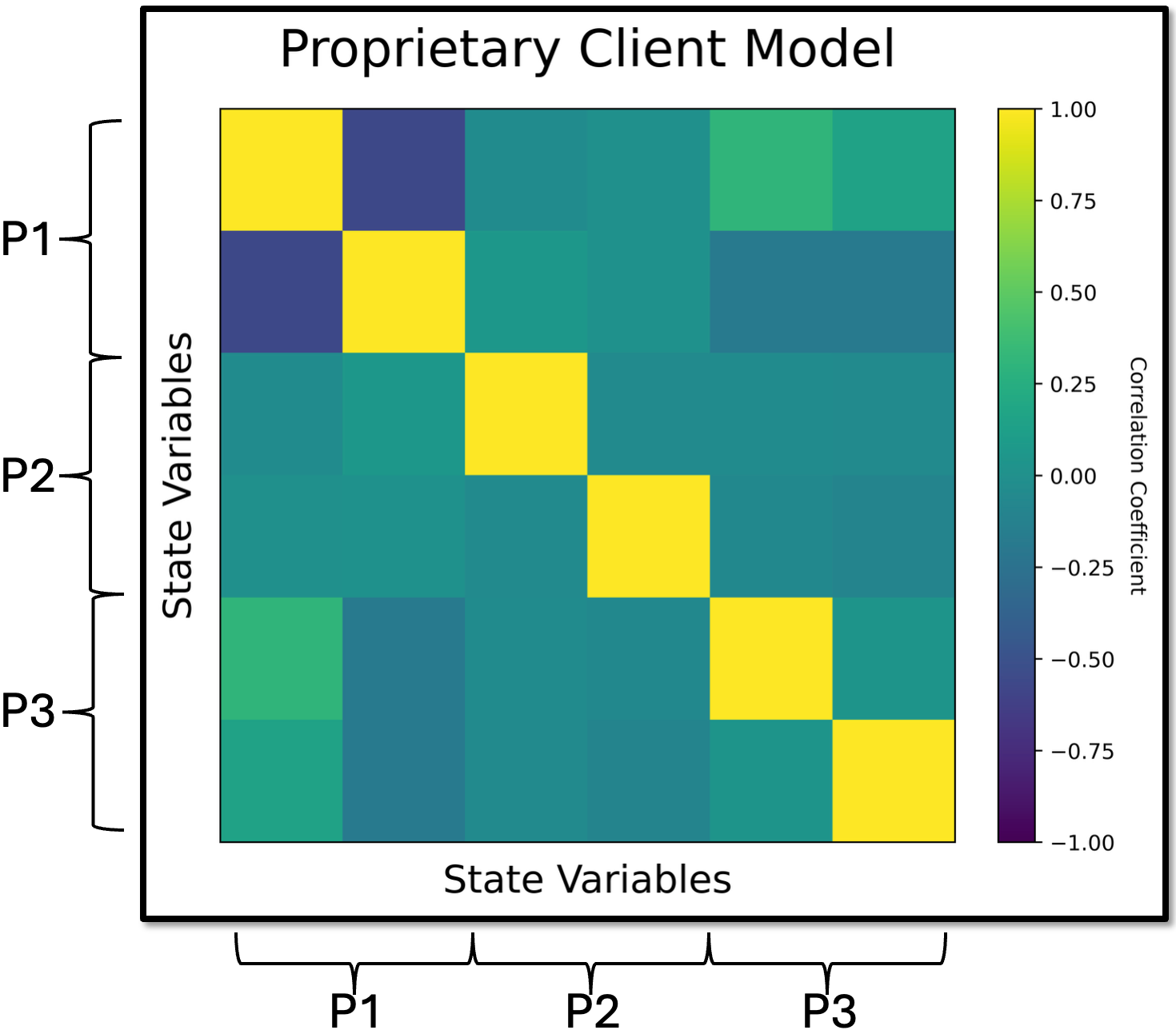}}
      \subfloat{%
        \includegraphics[width=0.48\columnwidth]{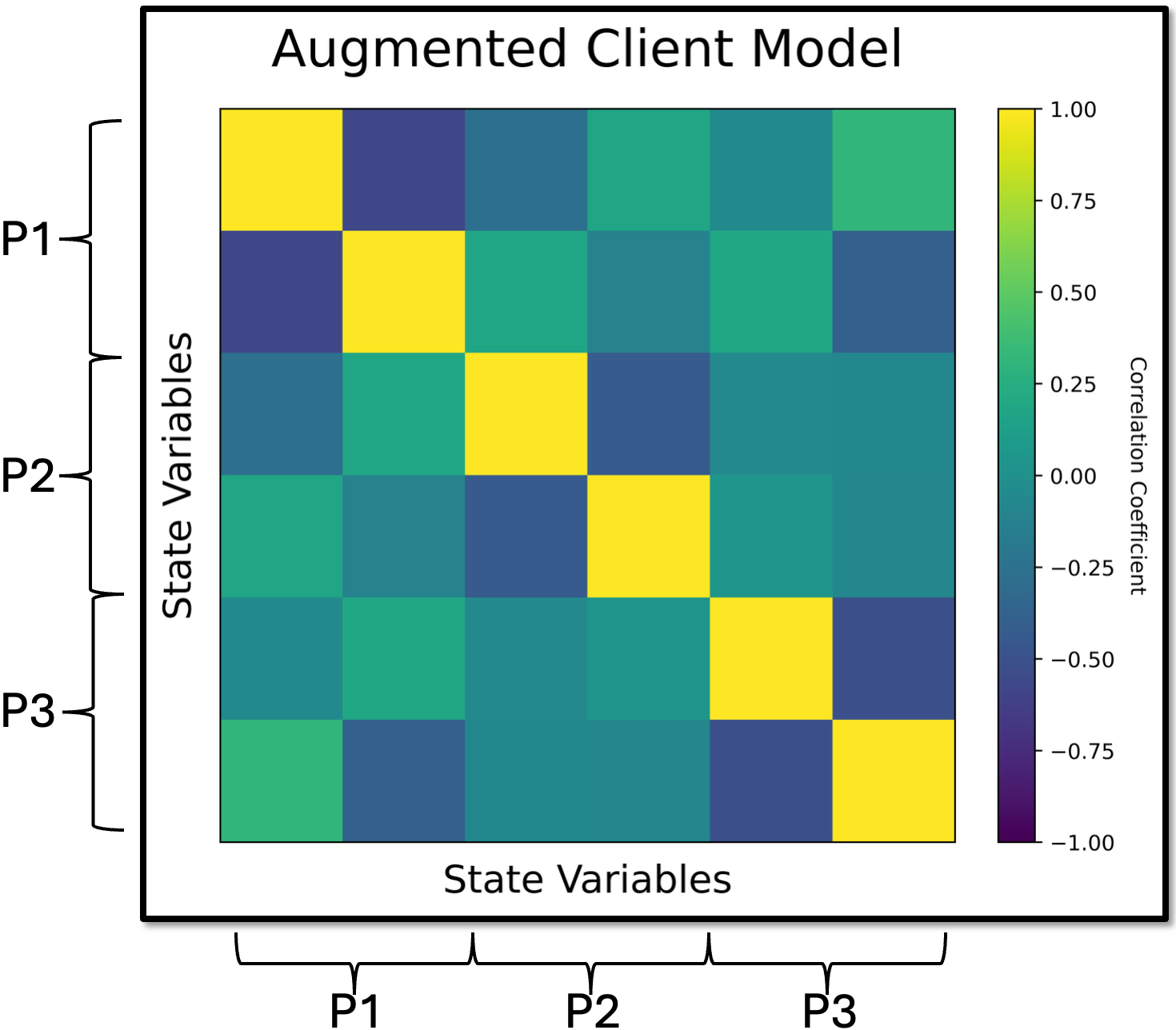}}\\
          \subfloat{%
       \includegraphics[width=0.48\columnwidth]{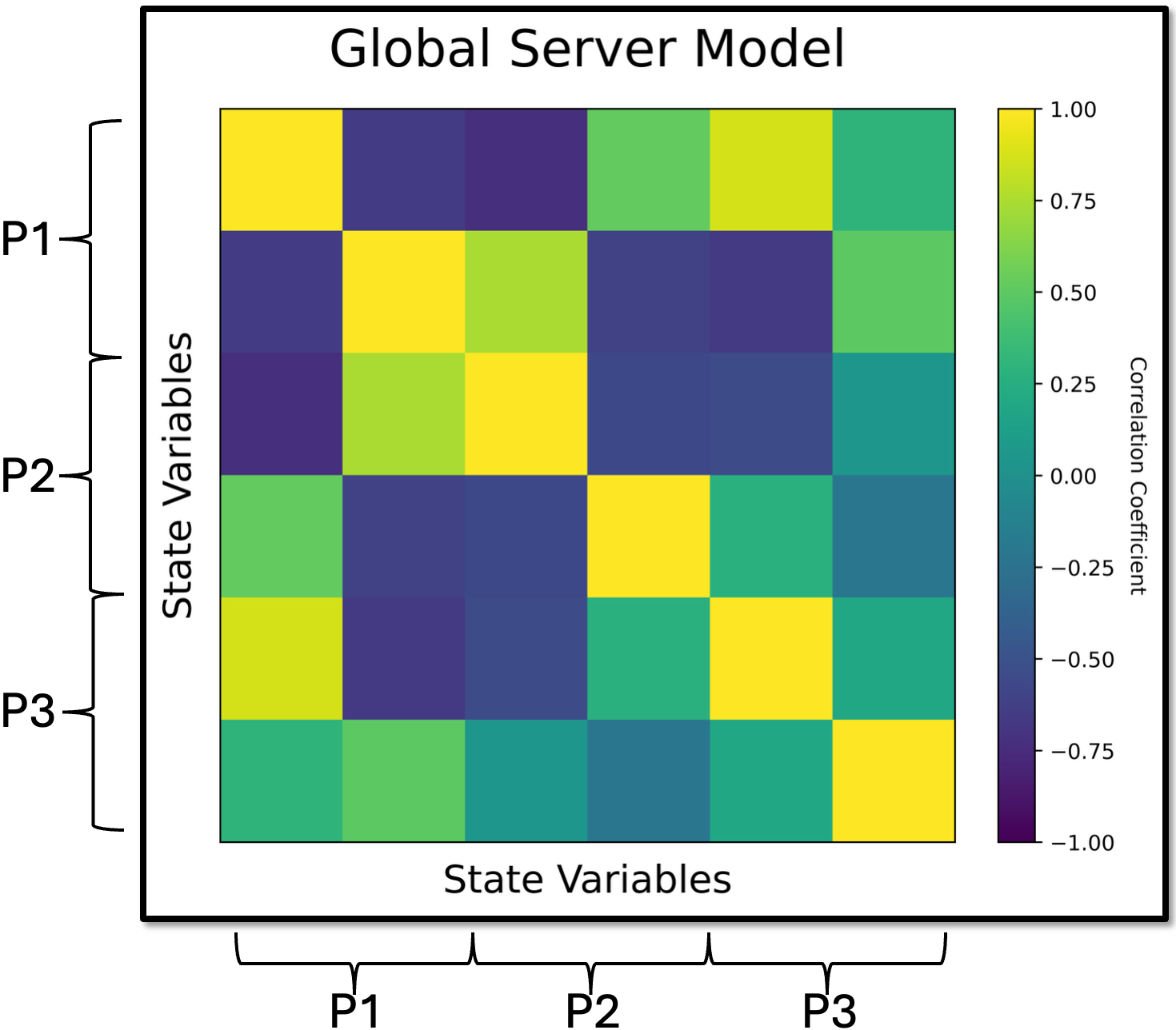}}
      \subfloat{%
        \includegraphics[width=0.48\columnwidth]{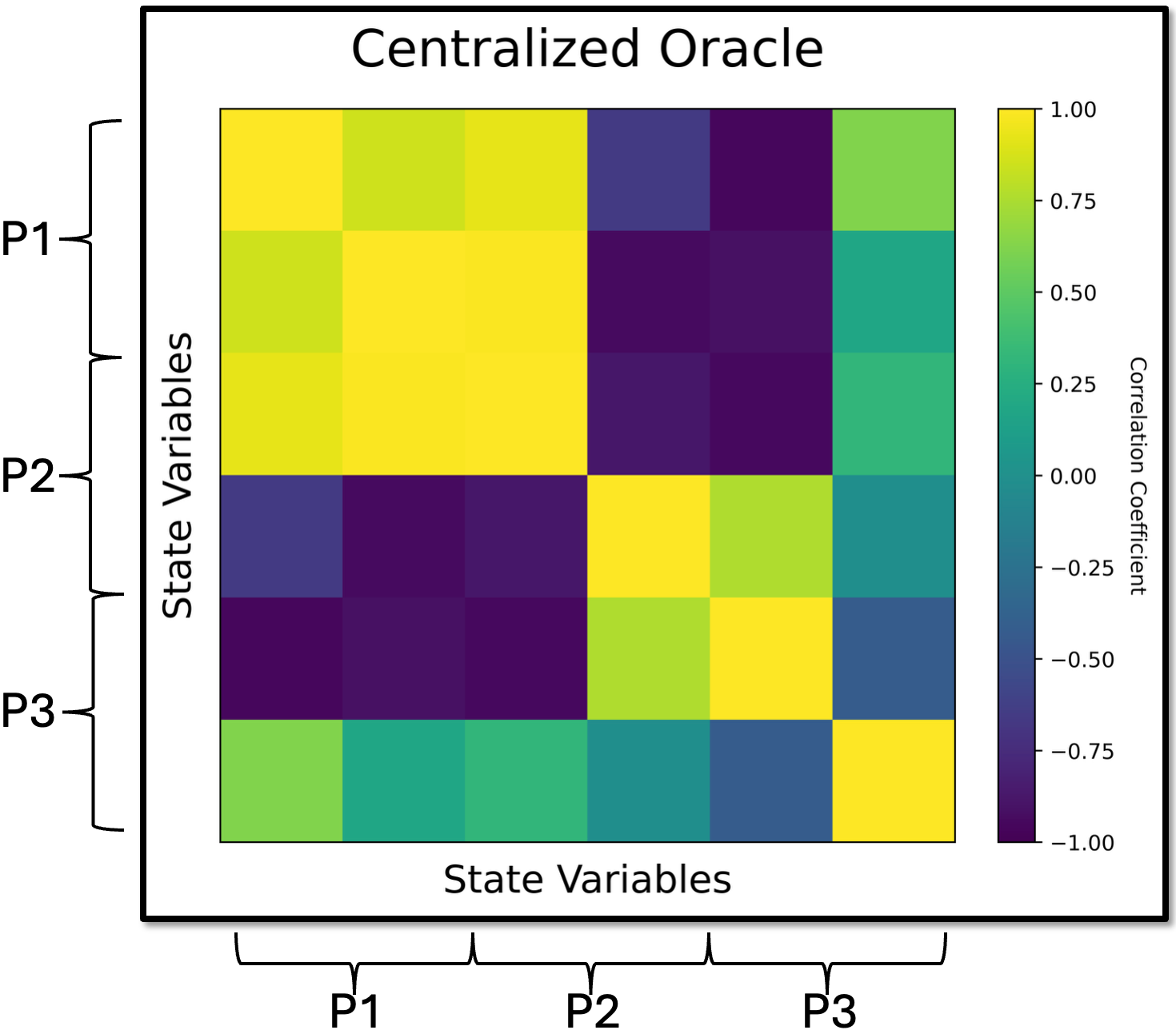}}
  \caption{Correlations between the estimated states of P1, P2, and P3 in proprietary client models (top left), augmented client models (top right), global server model (bottom left), and centralized oracle (bottom right).}
  \label{fig:Correlation_RealWorld} 
\end{figure}

\subsubsection{Training} 
Training on the HAI 23.05 benchmark, we quantify how well each model variant captures the real cross-process couplings among P1, P2, and P3. Fig. ~\ref{fig:Correlation_RealWorld} presents the Pearson correlation matrices of the concatenated states (2 states each per process, from all 3 processes) obtained using the following four models: \textbf{(1)} Proprietary Client Model, \textbf{(2)} Augmented Client Model, \textbf{(3)} Global Server Model, and \textbf{(4)} Centralized Oracle. 

The four correlation maps reveal a clear progression in the ability to capture cross-process structure.  
The \emph{Proprietary Client Model}, trained in isolation, displays bright diagonal blocks but near-zero off-diagonals, signalling that it primarily encodes within-process dynamics while remaining largely blind to interdependencies between \(\mathrm{P2}\) and \(\mathrm{P3}\).  
Augmenting this model with a small network \(\phi_m\) with 1 hidden layer changes this picture markedly: the augmented clients exhibit pronounced positive correlations in the $\mathrm{P2}$ -- $\mathrm{P3}$ and $\mathrm{P1}$-$\mathrm{P2}$ blocks.  

The iterative optimization between the client and the server sharpens the off-diagonal patterns of the \textit{Global Server Model}, and the resulting correlation matrix becomes similar to that of the \textit{Centralized Oracle}. This further indicates that the joint predictor \(f_s\) successfully propagates inter-process information back to every client.  

\begin{figure}[H]
    \centering
  \subfloat{%
       \includegraphics[width=\columnwidth]{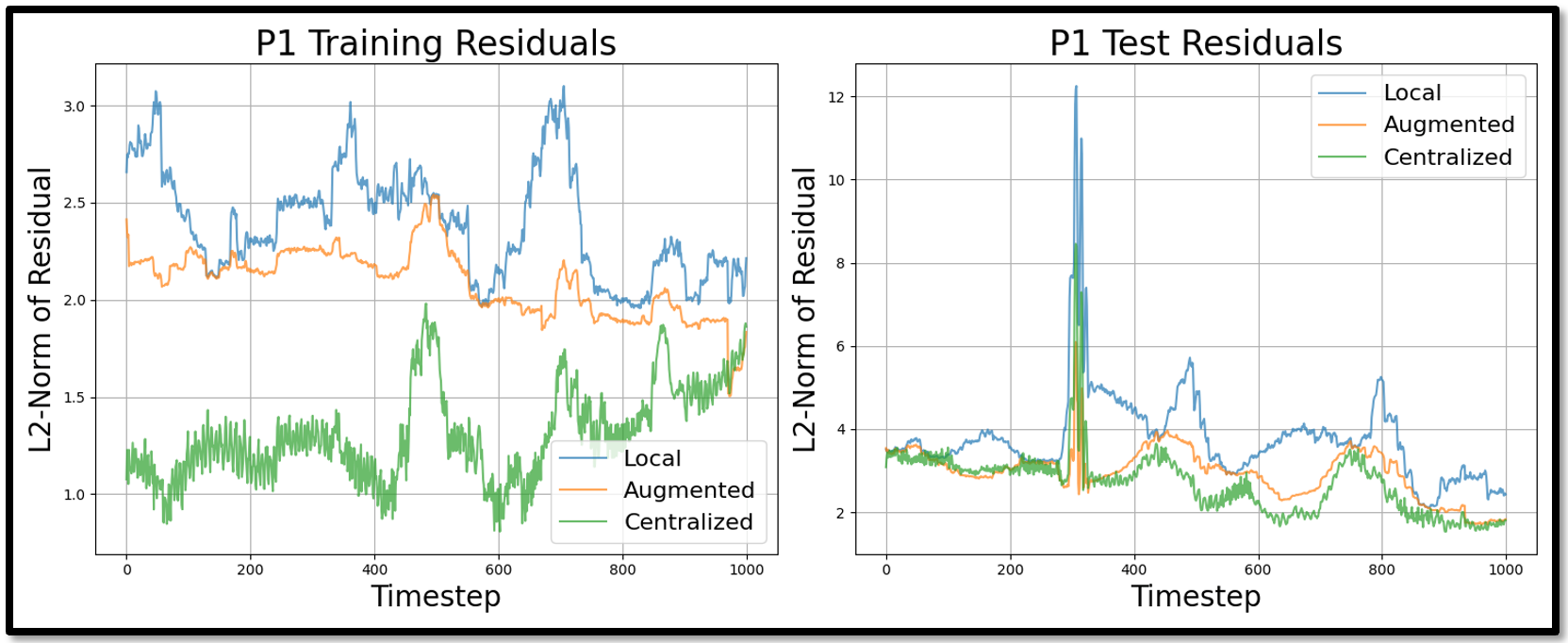}}\\
      \subfloat{%
        \includegraphics[width=\columnwidth]{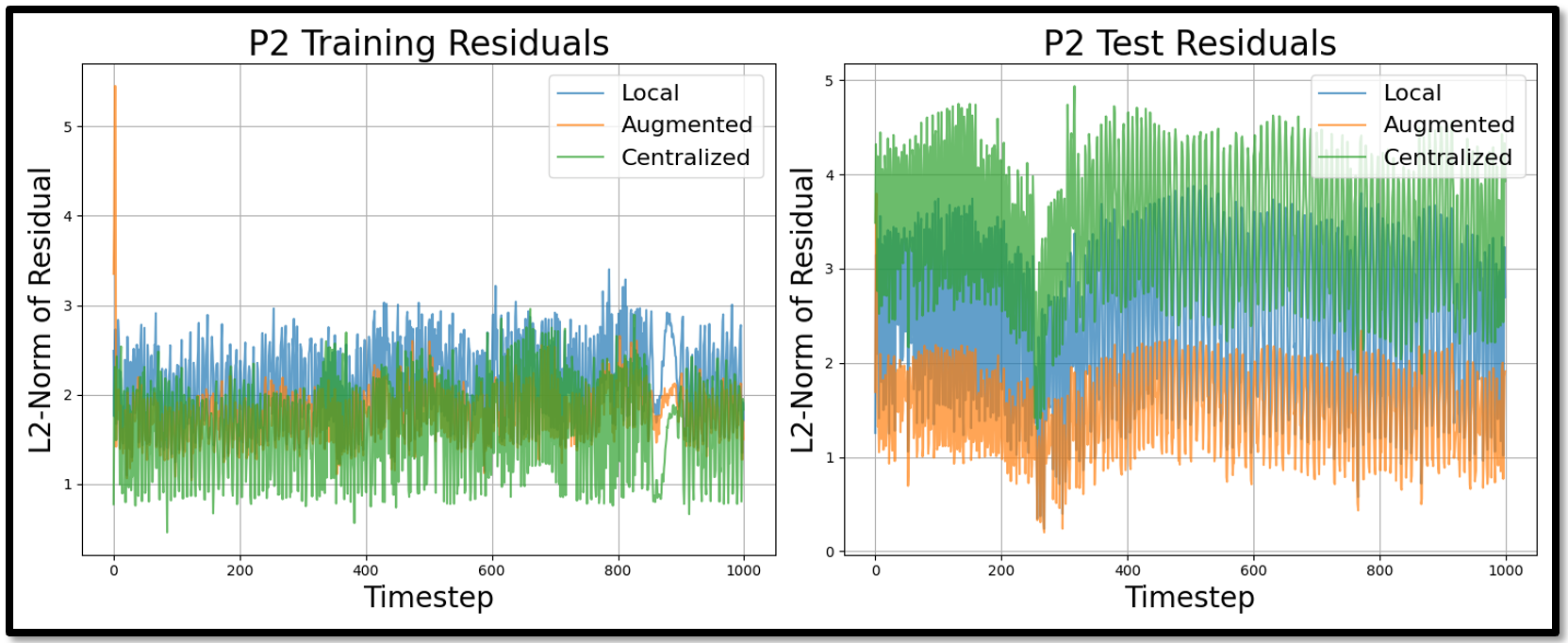}}\\
          \subfloat{%
       \includegraphics[width=\columnwidth]{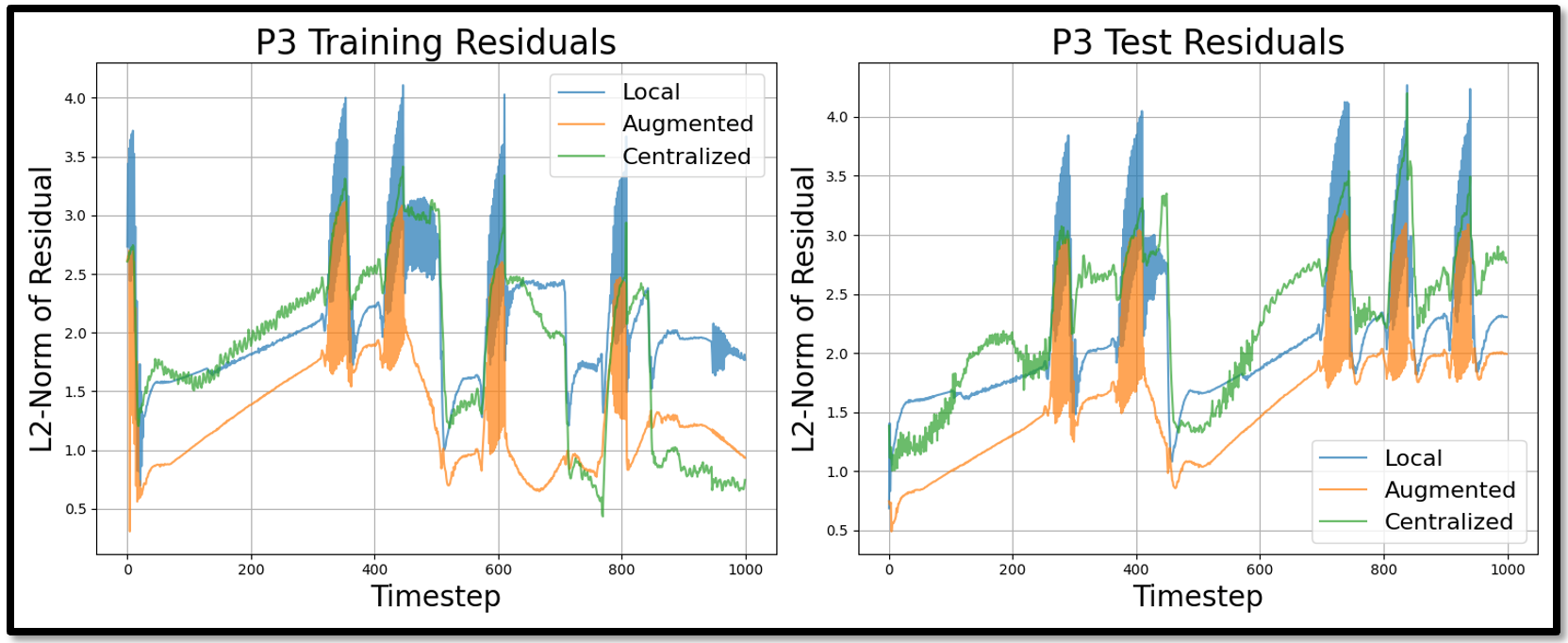}}
  \caption{Training and test residuals for the first 1000 time steps in P1 (top), P2 (middle), and P3 (bottom). Each figure shows the L2 norm of the residuals for the proprietary client model (labeled as ``local''), augmented client model, and the centralized oracle.}
  \label{fig:residuals} 
\end{figure}

\subsubsection{Inference}
\textbf{Residuals.} Fig.~\ref{fig:residuals} presents the $L_2$‐norm residuals (or reconstruction error for high-dimensional data) for P1, P2, P3 over the first 1,000 time‐steps in both training and test cases.  In every case, for most time instants the residual curves obey the ordering,
$\textit{\textbf{  \text{Local (Proprietary) Client Model} \;>\;
  \text{Augmented Client Model} \;>\;
  \text{Centralized Oracle}\,.}}$
During both nominal and attack conditions, the (local) proprietary client model exhibits the largest residuals while the augmented client model consistently lowers the error, approaching the centralized oracle benchmark. This ordering enables us to do systematic RCA using localized residual-based anomaly detection and ensures the validity of Table 1 (see main text of the paper). 

\textbf{Performance Metrics.} We first infer client‐specific Mahalanobis thresholds at selected training‐percentile levels (95\%, 99.5\%), and then, for each threshold, convert every model’s residual series into a Boolean alarm flag per minute.  Using these flags we compute: \textbf{(i)} $ARL_0$, the average time to first false alarm on nominal training data; \textbf{(ii)} $ARL_1$,  over the known attack windows on test data; and \textbf{(iii)} RCA accuracy, and \textbf{(iv)} RCA delay (mean $\pm$ std dev.).  

We limit the inference experiments to \textit{August 12, 2022}, where the details of the attack scenarios including start time and duration is provided. Furthermore, we know that the ground-truth root cause (or location of the attack) is P1.  Table~\ref{tab:performance_summary} presents the aforementioned performance metrics for the Proprietary Client Model, Our Methodology (Augmented Client Model $+$ Global Server Model), and the Centralized Oracle, across the three percentile thresholds for the data on \textit{August 12, 2022}.  
   
    
\begin{table*}
  \centering
  \caption{Summary of ARL$_0$, ARL$_1$, RCA accuracy, and RCA delay (mean\,$\pm$\,std) at different Mahalanobis‐threshold percentiles for each modeling approach.}
  \label{tab:performance_summary}
  \begin{tabular}{clcccc}
    \toprule
    \textbf{Percentile} & \textbf{Model} & \textbf{ARL$_0$} & \textbf{ARL$_1$} & \textbf{RCA Accuracy} & \textbf{RCA Delay in min} \\
     &  & \textbf{(min)} & \textbf{(min)} & \textbf{(\%)} & \textbf{(mean$\pm$std. dev)} \\
    \midrule
   
    {}  & Proprietary Client 
           & 71.00 & 48.50 &  &  \\
    95\%   & \textbf{Our Methodology} 
           & 131.00 & 52.50 & 50\% & 22.00$\pm$11.05 \\
    {}   & Centralized Oracle 
           & 89.00 & 52.50 &  &  \\
    \midrule
    
    {} & Proprietary Client 
           & 71.00 & 48.50 &  &  \\
    99.5\% & \textbf{Our Methodology}
           & 70.66 & 48.50 & 37.5\% & 8.25$\pm$8.63 \\
    {} & Centralized Oracle 
           & 59.66 & 48.50 &  &  \\
    \bottomrule
  \end{tabular}
\end{table*}

As the Mahalanobis‐threshold percentile increases from 95 \% to 99.5 \%, ARL$_0$ for all models rises showing fewer false alarms under nominal conditions. On the other hand, ARL$_1$ remains essentially unchanged, since every attack eventually exceeds even the strictest cutoff.  Our methodology, at 95 \% we observe ARL$_0$ = 131 min (versus 71 min for the proprietary client and 89 min for the oracle), perfect detection speed (ARL$_1$ = 52.5 min), but only 50 \% RCA accuracy with a relatively large attribution delay of 22.0 ± 11.1 min.  When we tighten to 99.5\%, ARL$_0$ falls to 70.7 min, on par with the proprietary filter and close to the centralized benchmark (59.7 min). Furthermore, the attribution delay improves dramatically to 8.3 ± 8.6 min, albeit at the cost of lower RCA accuracy (37.5\%).  This trade‐off shows that raising the threshold reduces false positives and speeds up root‐cause analysis, but can also miss correct attributions, allowing practitioners to tune between accuracy and timeliness.

%% file: Appendix/proofs.tex
\subsection{Inferencing: Root Cause Analysis}
\subsubsection{Proposition 5.2}
\begin{proof}
We know that primary anomalies at each client follow a Poisson arrival process (assumption). For a single client \( C_m \), the probability of an anomaly occurring in \( \Delta t \) is proportional to \( \lambda_m \Delta t \), where \( \lambda_m \) is the Poisson rate parameter for \( C_m \). Furtheremore, the arrival processes for primary anomalies are independent across clients.

Probability of Multiple Anomalies:
    \begin{itemize}
        \item The probability of exactly one client experiencing an anomaly in \( \Delta t \) is:
        \[
        P\left(\sum_{m=1}^M N_m(\Delta t) = 1\right) = \sum_{m=1}^M \lambda_m \Delta t + \mathcal{O}(\Delta t^2).
        \]
        \item The probability of two or more clients experiencing anomalies in \( \Delta t \) is:
        \[
        P\left(\sum_{m=1}^M N_m(\Delta t) > 1\right) = \mathcal{O}(\Delta t^2).
        \]
    \end{itemize}

 As \( \Delta t \to 0 \), the second-order term \( \mathcal{O}(\Delta t^2) \) vanishes faster than the first-order term, making the probability of multiple clients experiencing anomalies negligible:
    \[
    \lim_{\Delta t \to 0} P\left(\sum_{m=1}^M N_m(\Delta t) > 1\right) = 0.
    \]
\end{proof}
\subsection{Convergence Analysis}
\subsubsection{Lemma 6.5}
\begin{proof}
We want to bound
\begin{equation*}
\mathbb{E}\Bigl\|\phi_m\bigl(y_m^{t-1}; \theta_m^k\bigr) 
\;-\; 
\sum_{n \neq m} \mathbb{E}\bigl(\hat{x}_n^{t-1}\bigr)_o\Bigr\|.
\end{equation*}
Adding and subtracting the term $\phi_m\bigl(y_m^{t-1}; \theta_m^*\bigr)$ inside the norm we obtain
\begin{align*}
&\phi_m\bigl(y_m^{t-1}; \theta_m^k\bigr) 
\;-\;
\sum_{n \neq m} \mathbb{E}\bigl(\hat{x}_n^{t-1}\bigr)_o \\
&\qquad=
\underbrace{\Bigl[\phi_m\bigl(y_m^{t-1}; \theta_m^k\bigr)
- 
\phi_m\bigl(y_m^{t-1}; \theta_m^*\bigr)\Bigr]}_{\text{(i)}}
\;+\;
\underbrace{\Bigl[\phi_m\bigl(y_m^{t-1}; \theta_m^*\bigr)
- 
\sum_{n \neq m} \mathbb{E}\bigl(\hat{x}_n^{t-1}\bigr)_o\Bigr]}_{\text{(ii)}}.
\end{align*}
By the triangle inequality,
\begin{equation*}
\mathbb{E}\Bigl\|\phi_m\bigl(y_m^{t-1}; \theta_m^k\bigr)
- 
\sum_{n \neq m} \mathbb{E}\bigl(\hat{x}_n^{t-1}\bigr)_o\Bigr\|
\;\;\le\;\;
\|\text{(i)}\|\;+\;\|\text{(ii)}\|.
\end{equation*}

\noindent
\textbf{Term (ii).} By definition of the irreducible error (Definition 6.4),
\begin{equation*}
\|\text{(ii)}\|
=
\mathbb{E}\Bigl\|\phi_m\bigl(y_m^{t-1}; \theta_m^*\bigr)
- 
\sum_{n \neq m} \mathbb{E}\bigl(\hat{x}_n^{t-1}\bigr)_o\Bigr\|
=
\epsilon^*.
\end{equation*}

\noindent
\textbf{Term (i).} By the assumption that $\phi_m(\cdot;\theta_m)$ is $\mathcal{L}_{\theta_m}$-Lipschitz in~$\theta_m$, we have
\begin{equation*}
\mathbb{E}\Bigl\|\phi_m\bigl(y_m^{t-1}; \theta_m^k\bigr)
- 
\phi_m\bigl(y_m^{t-1}; \theta_m^*\bigr)\Bigr\|
\;\;\le\;\;
\mathcal{L}_{\theta_m}\,\bigl\|\theta_m^k - \theta_m^*\bigr\|.
\end{equation*}
Combining the two terms,
\begin{equation*}
\mathbb{E}\Bigl\|\phi_m\bigl(y_m^{t-1}; \theta_m^k\bigr)
- 
\sum_{n \neq m} \mathbb{E}\bigl(\hat{x}_n^{t-1}\bigr)_o\Bigr\|
\;\;\le\;\;
\mathcal{L}_{\theta_m}\,\bigl\|\theta_m^k - \theta_m^*\bigr\|
\;+\;
\epsilon^*.
\end{equation*}

\noindent
\textbf{Convergence.} 
Under the assumptions that the augmented client loss ${(L_m)}_a$ is convex and $\mathcal{L}_{L_m}$-Lipschitz smooth, and with learning rates $\eta_1,\eta_2 \le 1/L$, we have $\theta_m^k \to \theta_m^*$ as $k\to\infty$.  Thus,
\begin{equation*}
\lim_{k\to\infty}
\mathbb{E}\Bigl\|\phi_m\bigl(y_m^{t-1}; \theta_m^k\bigr)
- 
\sum_{n \neq m} \mathbb{E}\bigl(\hat{x}_n^{t-1}\bigr)_o\Bigr\|
\;\;\le\;\;
\lim_{k\to\infty}
\Bigl(
\mathcal{L}_{\theta_m}\,\bigl\|\theta_m^k - \theta_m^*\bigr\| 
\;+\;
\epsilon^*
\Bigr)
\;=\;
\epsilon^*.
\end{equation*}
Hence, the limit of the norm difference is bounded by~$\epsilon^*$. 
\end{proof}
\subsubsection{Proposition 6.8}
\begin{proof}
We know that the Jacobians of the state transitions for both the local client model and the centralized oracle are bounded. Let \( f_m \) represent the state transition function of client \( m \), and \( f \) represent the state transition function of the centralized oracle.

For client \( m \), let \( z_m \) be the input to its local state transition function, and let \( \mathbf{z} \) denote the combined state across all clients (input to the centralized oracle). The extraction function \( \text{Extract}_m \) maps the centralized state transition output to the corresponding client \( m \)'s state.

The structural difference is defined as:
\[
\mathbb{E}\|f_m(z_m) - \text{Extract}_m(f(\mathbf{z}))\|.
\]

Expanding \( f_m(z_m) \) and \( \text{Extract}_m(f(\mathbf{z})) \), we note that the boundedness of the Jacobians implies that there exist constants \( \mathcal{L}_{f_m} \) and \( \mathcal{L}_f \), representing the Lipschitz constants of \( f_m \) and \( f \) respectively, such that:
\[
\|f_m(z_m) - f_m(z_m')\| \leq \mathcal{L}_{f_m} \|z_m - z_m'\| \quad \text{and} \quad \|f(\mathbf{z}) - f(\mathbf{z}')\| \leq \mathcal{L}_f \|\mathbf{z} - \mathbf{z}'\|.
\]

Furthermore, the mapping \( \text{Extract}_m \) is linear and bounded, meaning there exists a constant \( C_m \) such that:
\[
\|\text{Extract}_m(f(\mathbf{z})) - \text{Extract}_m(f(\mathbf{z}'))\| \leq C_m \|f(\mathbf{z}) - f(\mathbf{z}')\|.
\]

Now, consider the difference:
\[
\mathbb{E}\|f_m(z_m) - \text{Extract}_m(f(\mathbf{z}))\|.
\]

Using the triangle inequality:
\[
\mathbb{E}\|f_m(z_m) - \text{Extract}_m(f(\mathbf{z}))\| \leq \mathbb{E}\|f_m(z_m) - f_m(z_m')\| + \mathbb{E}\|f_m(z_m') - \text{Extract}_m(f(\mathbf{z}))\|.
\]

By the Lipschitz property of \( f_m \):
\[
\mathbb{E}\|f_m(z_m) - f_m(z_m')\| \leq \mathcal{L}_{f_m} \|z_m - z_m'\|.
\]

For the second term, using the boundedness of \( \text{Extract}_m \) and the Lipschitz property of \( f \):
\[
\mathbb{E}\|f_m(z_m') - \text{Extract}_m(f(\mathbf{z}))\| \leq C_m \mathcal{L}_f \|\mathbf{z} - \mathbf{z}'\|.
\]

Combining these results:
\[
\mathbb{E}\|f_m(z_m) - \text{Extract}_m(f(\mathbf{z}))\| \leq \mathcal{L}_{f_m} \|z_m - z_m'\| + C_m \mathcal{L}_f \|\mathbf{z} - \mathbf{z}'\|.
\]

Since \( z_m \), \( \mathbf{z} \), and their respective transformations are bounded, there exists a constant \( \delta_m \) such that:
\[
\mathbb{E}\|f_m(z_m) - \text{Extract}_m(f(\mathbf{z}))\| \leq \delta_m.
\]
\end{proof}
\subsubsection{Theorem 6.10}
\begin{proof}
The augmented client model predicts the state as:
\[
(x^{t, k}_m)_a = f_m\Bigl(\mathbb{E}\bigl[(\hat{x}^{t-1}_m)_c\bigr] + \phi_m\bigl(y^{t-1}_m; \theta_m^k\bigr)\Bigr),
\]
while the centralized oracle predicts the state using information from all clients:
\[
(x^t_m)_o = f_m\Bigl(\mathbb{E}\bigl[(\hat{x}^{t-1}_m)_o\bigr], \{\mathbb{E}\bigl[(\hat{x}^{t-1}_n)_o\bigr]\}_{n \neq m}\Bigr).
\]

The difference between the predicted states is:
\[
(x^{t, k}_m)_a - (x^t_m)_o = f_m\Bigl(\mathbb{E}\bigl[(\hat{x}^{t-1}_m)_c\bigr] + \phi_m\bigl(y^{t-1}_m; \theta_m^k\bigr)\Bigr)
- \text{Extract}_m\Biggl(f_m\Bigl(\mathbb{E}\bigl[(\hat{x}^{t-1}_m)_o\bigr], \{\mathbb{E}\bigl[(\hat{x}^{t-1}_n)_o\bigr]\}_{n \neq m}\Bigr)\Biggr).
\]

Using the Lipschitz continuity of \(f_m(\cdot)\), we have:
\[
\|f_m(\mathbf{x}_1) - f_m(\mathbf{x}_2)\| 
\leq \mathcal{L}_{f_m} \|\mathbf{x}_1 - \mathbf{x}_2\|,
\quad \forall\, \mathbf{x}_1, \mathbf{x}_2.
\]

Substituting:
\[
\mathbf{x}_1 = \mathbb{E}\bigl[(\hat{x}^{t-1}_m)_c\bigr] + \phi_m\bigl(y^{t-1}_m; \theta_m^k\bigr), 
\quad
\mathbf{x}_2 = \Bigl(\mathbb{E}\bigl[(\hat{x}^{t-1}_m)_o\bigr], \{\mathbb{E}\bigl[(\hat{x}^{t-1}_n)_o\bigr]\}_{n \neq m}\Bigr),
\]
we obtain:
\begin{equation*}
\begin{split}
\Bigl\| f_m\Bigl(\mathbb{E}\bigl[(\hat{x}^{t-1}_m)_c\bigr] + \phi_m\bigl(y^{t-1}_m; \theta_m^k\bigr)\Bigr)
\;-\; f_m\Bigl(\mathbb{E}\bigl[(\hat{x}^{t-1}_m)_o\bigr] + \phi_m\bigl(y^{t-1}_m; \theta_m^k\bigr)\Bigr) \Bigr\|
\;&\le\;
\mathcal{L}_{f_m}\Bigl(
\bigl\|\mathbb{E}\bigl[(\hat{x}^{t-1}_m)_c\bigr] - \mathbb{E}\bigl[(\hat{x}^{t-1}_m)_o\bigr]\bigr\|
\;\\&+\;
\bigl\|\phi_m\bigl(y^{t-1}_m; \theta_m^k\bigr)
-\! \sum_{n \neq m}\mathbb{E}\bigl[(\hat{x}_n^{t-1})_o\bigr]\bigr\|
\Bigr).
\end{split}
\end{equation*}

We also know from Proposition 6.8 that:
\[
\Biggl\|f_m\Bigl(\mathbb{E}\bigl[(\hat{x}^{t-1}_m)_o\bigr] + \phi_m\bigl(y^{t-1}_m; \theta_m^k\bigr)\Bigr)
-\text{Extract}_m\Biggl(f\Bigl(\mathbb{E}\bigl[(\hat{x}^{t-1}_m)_o\bigr], \{\mathbb{E}\bigl[(\hat{x}^{t-1}_n)_o\bigr]\}_{n \neq m}\Bigr)\Biggr)\Biggr\| \leq \delta_m.
\]

Thus, we obtain the following upper bound:
\begin{equation*}
\Bigl\|(x^{t, k}_m)_a - (x^t_m)_o\Bigr\| 
\;\le
\mathcal{L}_{f_m}\Bigl(
\bigl\|\mathbb{E}\bigl[(\hat{x}^{t-1}_m)_c\bigr] - \mathbb{E}\bigl[(\hat{x}^{t-1}_m)_o\bigr]\bigr\|
\;+\;
\bigl\|\phi_m\bigl(y^{t-1}_m; \theta_m^k\bigr)
-\! \sum_{n \neq m}\mathbb{E}\bigl[(\hat{x}_n^{t-1})_o\bigr]\bigr\|
\Bigr) + \delta_m.
\end{equation*}

Next, we apply Lemma 6.5. Recall:
\[
\bigl\|\phi_m\bigl(y^{t-1}_m; \theta_m^k\bigr) 
-\!\!\!
\sum_{n \neq m} \mathbb{E}\bigl[(\hat{x}_n^{t-1})_o\bigr]\bigr\|
\;\le\;
\mathcal{L}_{\theta_m}\,\|\theta_m^k - \theta_m^*\|
\;+\;
\epsilon^*,
\]
where \(\epsilon^*\) is the irreducible error. Substituting the lemma into the first inequality, we have:
\begin{align*}
\|(x^{t,k}_m)_a - (x^t_m)_o\|
\;\le\;
\mathcal{L}_{f_m} \Bigl(
\bigl\|\mathbb{E}\bigl[(\hat{x}^{t-1}_m)_c\bigr] - \mathbb{E}\bigl[(\hat{x}^{t-1}_m)_o\bigr]\bigr\| 
+ \mathcal{L}_{\theta_m}\|\theta_m^k - \theta_m^*\|
+ \epsilon^*
\Bigr) + \delta_m.
\end{align*}

Focusing on \(\bigl\|\mathbb{E}\bigl[(\hat{x}^t_m)_c\bigr] - \mathbb{E}\bigl[(\hat{x}^t_m)_o\bigr]\bigr\|\), and using the structure of the Kalman filter estimates and the Lipschitz continuity of \(h_m(\cdot)\), we derive the recursion:
\[
D_t := \bigl\|\mathbb{E}\bigl[(\hat{x}^t_m)_c\bigr] - \mathbb{E}\bigl[(\hat{x}^t_m)_o\bigr]\bigr\| 
\leq \|\Delta K_m\| \cdot \mathbb{E}\bigl[\|{(r_m^t)}_o\|\bigr] 
+ \mathcal{L}_{h_m} C_x 
+ (1 + \|\Delta K_m\| \mathcal{L}_{h_m}) \Bigl(\mathcal{L}_{f_m} D_{t-1} + \mathcal{L}_{f_m} (M-1) C_x\Bigr),
\]
where \({(r_m^t)}_o := y_m^t - h_m((x^t_m)_o)\).

Expanding the recursion iteratively and assuming the stability condition \(1 + \|\Delta K_m\| \mathcal{L}_{h_m} < \frac{1}{\mathcal{L}_{f_m}}\), the recursive terms decay geometrically. The steady-state error \(D_t\) is bounded by:
\[
\lim_{t \to \infty} D_t 
\leq \frac{\|\Delta K_m\| \sup_t \mathbb{E}\bigl[\|{(r_m^t)}_o\|\bigr] + \mathcal{L}_{h_m} C_x + \mathcal{L}_{f_m} (M-1) C_x}{1 - (1 + \|\Delta K_m\| \mathcal{L}_{h_m}) \mathcal{L}_{f_m}}.
\]

Substituting this result into the bound for \(\|(x^{t, k}_m)_a - (x^t_m)_o\|\), and noting that \(\|\theta_m^k - \theta_m^*\| \to 0\) as \(k \to \infty\), we obtain:
\[
\lim_{k \to \infty} \|(x^{t, k}_m)_a - (x^t_m)_o\| 
\leq \mathcal{L}_{f_m} \Bigl(\rho + \epsilon^*\Bigr) + \delta_m,
\]
where:
\[
\rho := \frac{\|\Delta K_m\| \sup_t \mathbb{E}\bigl[\|{(r_m^t)}_o\|\bigr] + \mathcal{L}_{h_m} C_x + \mathcal{L}_{f_m} (M-1) C_x}{1 - (1 + \|\Delta K_m\| \mathcal{L}_{h_m}) \mathcal{L}_{f_m}}.
\]

Thus, the error converges as claimed.
\end{proof}
\subsubsection{Theorem 6.13}
\begin{proof}
We aim to bound the error \(\mathbb{E}\bigl[\|x_s^{t, k} - x_o^t\|\bigr]\), where:
\[
x_s^{t, k} = f_s\bigl(\{\mathbb{E}\bigl[(\hat{x}_m^t)_c\bigr]\}_{m=1}^M; \theta_s^k\bigr), \quad x_o^t = f\bigl(\{\mathbb{E}\bigl[(\hat{x}_m^t)_o\bigr]\}_{m=1}^M\bigr).
\]

Re-writing the difference, we obtain:
\[
\|x_s^{t, k} - x_o^t\| = \bigl\|f_s\bigl(\{\mathbb{E}\bigl[(\hat{x}_m^t)_c\bigr]\}_{m=1}^M; \theta_s^k\bigr) - f_s\bigl(\{\mathbb{E}\bigl[(\hat{x}_m^t)_o\bigr]\}_{m=1}^M; \theta_s^k\bigr) 
+ f_s\bigl(\{\mathbb{E}\bigl[(\hat{x}_m^t)_o\bigr]\}_{m=1}^M; \theta_s^k\bigr) - f\bigl(\{\mathbb{E}\bigl[(\hat{x}_m^t)_o\bigr]\}_{m=1}^M\bigr)\bigr\|.
\]

Using the triangle inequality, this is bounded as:
\[
\|x_s^{t, k} - x_o^t\| \leq \bigl\|f_s\bigl(\{\mathbb{E}\bigl[(\hat{x}_m^t)_c\bigr]\}_{m=1}^M; \theta_s^k\bigr) - f_s\bigl(\{\mathbb{E}\bigl[(\hat{x}_m^t)_o\bigr]\}_{m=1}^M; \theta_s^k\bigr)\bigr\| + \bigl\|f_s\bigl(\{\mathbb{E}\bigl[(\hat{x}_m^t)_o\bigr]\}_{m=1}^M; \theta_s^k\bigr) - f\bigl(\{\mathbb{E}\bigl[(\hat{x}_m^t)_o\bigr]\}_{m=1}^M\bigr)\bigr\|.
\]

Using Lipschitz continuity of \(f_s\) with respect to its inputs, with constant \(\mathcal{L}_{f_s}\), we have:
\[
\bigl\|f_s\bigl(\{\mathbb{E}\bigl[(\hat{x}_m^t)_c\bigr]\}_{m=1}^M; \theta_s^k\bigr) - f_s\bigl(\{\mathbb{E}\bigl[(\hat{x}_m^t)_o\bigr]\}_{m=1}^M; \theta_s^k\bigr)\bigr\| \leq \mathcal{L}_{f_s} \sum_{m=1}^M \bigl\|\mathbb{E}\bigl[(\hat{x}_m^t)_c\bigr] - \mathbb{E}\bigl[(\hat{x}_m^t)_o\bigr]\bigr\|.
\]

Let \(D_t := \bigl\|\mathbb{E}\bigl[(\hat{x}_m^t)_c\bigr] - \mathbb{E}\bigl[(\hat{x}_m^t)_o\bigr]\bigr\|\). Then:
\[
\bigl\|f_s\bigl(\{\mathbb{E}\bigl[(\hat{x}_m^t)_c\bigr]\}_{m=1}^M; \theta_s^k\bigr) - f_s\bigl(\{\mathbb{E}\bigl[(\hat{x}_m^t)_o\bigr]\}_{m=1}^M; \theta_s^k\bigr)\bigr\| \leq \mathcal{L}_{f_s} \cdot M \cdot \sup_{m} D_t.
\]

In the steady state (large \(t\)), using Theorem 6.10, \(D_t\) satisfies:
\[
D_t \leq \frac{\|\Delta K_m\| \sup_t \mathbb{E}\bigl[\|{(r_m^t)}_o\|\bigr] + \mathcal{L}_{h_m} C_x + \mathcal{L}_{f_m} (M-1) C_x}{1 - (1 + \|\Delta K_m\| \mathcal{L}_{h_m}) \mathcal{L}_{f_m}}.
\]

Using Definition 6.11:
\[
\lim_{k \to \infty} \bigl\|f_s\bigl(\{\mathbb{E}\bigl[(\hat{x}_m^t)_o\bigr]\}_{m=1}^M; \theta_s^k\bigr) - f\bigl(\{\mathbb{E}\bigl[(\hat{x}_m^t)_o\bigr]\}_{m=1}^M\bigr)\bigr\| = \sigma^*.
\]

Substituting:
\[
\lim_{k \to \infty} \mathbb{E}\bigl[\|x_s^{t, k} - x_o^t\|\bigr] 
\leq \mathcal{L}_{f_s} \cdot M \cdot \rho + \sigma^*,
\]
where:
\[
\rho := \frac{\|\Delta K_m\| \sup_t \mathbb{E}\bigl[\|{(r_m^t)}_o\|\bigr] + \mathcal{L}_{h_m} C_x + \mathcal{L}_{f_m} (M-1) C_x}{1 - (1 + \|\Delta K_m\| \mathcal{L}_{h_m}) \mathcal{L}_{f_m}}.
\]
\end{proof}
\subsection{Privacy Analysis}
We provide two definitions that will be used to prove Theorems 7.1 and 7.2. 
\begin{definition}[\textbf{$\ell_2$-Sensitivity}]
The $\ell_2$-sensitivity $\Delta$ of a function $f: \mathcal{D} \rightarrow \mathbb{R}^k$ is the maximum change in the output's $\ell_2$-norm due to a change in a single data point:
\begin{equation*}
\Delta = \max_{D, D'} \| f(D) - f(D') \|_2,
\end{equation*}
where $D$ and $D'$ are neighboring datasets.
\end{definition}

\begin{definition}[\textbf{Gaussian Mechanism}]
Given a function $f: \mathcal{D} \rightarrow \mathbb{R}^k$ with $\ell_2$-sensitivity $\Delta$, the Gaussian mechanism $\mathcal{M}$ adds noise drawn from a Gaussian distribution to each output component:
\begin{equation*}
\mathcal{M}(D) = f(D) + \mathcal{N}(0, \sigma^2 I_k),
\end{equation*}
where $\sigma \geq \frac{\Delta \sqrt{2 \ln (1.25/\delta)}}{\varepsilon}$ ensures that $\mathcal{M}$ satisfies $(\varepsilon, \delta)$-differential privacy.
\end{definition}
\subsubsection{Theorem 7.1}
The local and augmented states are perturbed as:
\[
{(\tilde{x}_{m}^{t-1})}_c = {(\hat{x}_{m}^{t-1})}_c + \mathcal{N}(0, \sigma_c^2),
\quad
{(\tilde{x}_{m}^t)}_a = {(x_{m}^t)}_a + \mathcal{N}(0, \sigma_a^2).
\]

Substituting the Extended Kalman Filter equations, the local state estimate \({(\hat{x}_{m}^{t-1})}_c\) is:
    \[
    {(\hat{x}_{m}^{t-1})}_c = f_m\big({(\hat{x}_{m}^{t-2})}_c\big) + K_m^c \big(y_m^{t-1} - h_m\big({(x_m^{t-1})}_c\big)\big).
    \]
Adding bounds on the Kalman gain \( K_m^c \) and observation Jacobian \( H_m \), the sensitivity is:
    \[
    \Delta_c \leq \|\Delta K_m\| (\mathcal{L}_{h_m} C_x + \sup_t \|r_m^t\|),
    \]
    where \( \mathcal{L}_{h_m} \) and \( C_x \) are bounds on \( h_m \) and state space, respectively, and \( r_m^t \) is the residual.

Similarly, substituting the augmented state equation:
    \[
    {(x_{m}^t)}_a = f_m\big({(\hat{x}_{m}^{t-1})}_a\big) + \phi_m(y_m^t; \theta_m),
    \]
    and bounding \( f_m \) and \(\phi_m\), we obtain:
    \[
    \Delta_a \leq \mathcal{L}_{f_m} C_x + \mathcal{L}_{\theta_m} \sup_t \|y_m^t\|.
    \]

Adding Gaussian noise ensures \((\varepsilon, \delta)\)-differential privacy. Substituting the sensitivity terms into the noise scale equation:
\[
\sigma_c \geq \frac{2 \|\Delta K_m\| (\mathcal{L}_{h_m} C_x + \sup_t \|r_m^t\|) \sqrt{2 \ln(1.25 / \delta_c)}}{\varepsilon_c},
\quad
\sigma_a \geq \frac{2 (\mathcal{L}_{f_m} C_x + \mathcal{L}_{\theta_m} \sup_t \|y_m^t\|) \sqrt{2 \ln(1.25 / \delta_a)}}{\varepsilon_a}.
\]

Adding the privacy budgets of the two mechanisms yields:
\[
\varepsilon = \varepsilon_c + \varepsilon_a, \quad \delta = \delta_c + \delta_a.
\]
\subsubsection{Theorem 7.2}
The server sends perturbed gradients:
\[
\tilde{g}_m^t = \text{clip}(g_m^t, C_g) + \mathcal{N}(0, \sigma_g^2),
\]
where \( \text{clip}(g_m^t, C_g) \) ensures \(\|g_m^t\|_2 \leq C_g\).

Substituting the gradient equation:
    \[
    g_m^t = \nabla_{\theta_m^k} L_s = \big(\nabla_{{(\hat{x}^t_m)}_a} L_s\big) \cdot \big(\nabla_{\theta_m^k} {(\hat{x}^t_m)}_a\big),
    \]
    and bounding each term, we find that the sensitivity is:
    \[
    \Delta_g = \max_{t} \|\text{clip}(g_m^t, C_g)\|_2 \leq C_g.
    \]

Adding Gaussian noise ensures \((\varepsilon, \delta)\)-differential privacy. Substituting the sensitivity bound:
\[
\sigma_g \geq \frac{2 C_g \sqrt{2 \ln(1.25 / \delta)}}{\varepsilon}.
\]

\subsubsection{Theorem 7.3}
To demonstrate that adding \( p_c \) and \( p_a \) satisfies \(\varepsilon\)-differential privacy, we analyze the randomized response mechanism as follows:

Starting with the perturbation mechanism, the binary anomaly flags are perturbed independently using:
\[
\tilde{Z}_c^t =
\begin{cases}
Z_c^t, & \text{with probability } p_c, \\
1 - Z_c^t, & \text{with probability } 1 - p_c,
\end{cases}
\quad
\tilde{Z}_a^t =
\begin{cases}
Z_a^t, & \text{with probability } p_a, \\
1 - Z_a^t, & \text{with probability } 1 - p_a.
\end{cases}
\]

Substituting the definition of \(\varepsilon\)-differential privacy, the mechanism satisfies differential if, for any two neighboring datasets \( \mathcal{D} \) and \( \mathcal{D}' \) differing in one binary flag, and any possible output \( \tilde{z} \):
\[
\frac{\Pr[\mathcal{M}(\mathcal{D}) = \tilde{z}]}{\Pr[\mathcal{M}(\mathcal{D}') = \tilde{z}]} \leq e^{\varepsilon}.
\]

Considering the sensitivity of binary flags, the maximum difference in output caused by changing one record is:
\[
\Delta_Z = \max_{Z, Z'} |Z - Z'| = 1.
\]

Analyzing the probabilities of reporting \( \tilde{Z}_c^t = 1 \) under two neighboring datasets:
\[
\Pr[\tilde{Z}_c^t = 1 \mid Z_c^t = 1] = p_c, \quad \Pr[\tilde{Z}_c^t = 1 \mid Z_c^t = 0] = 1 - p_c.
\]

Dividing these probabilities, the ratio is:
\[
\frac{\Pr[\tilde{Z}_c^t = 1 \mid Z_c^t = 1]}{\Pr[\tilde{Z}_c^t = 1 \mid Z_c^t = 0]} = \frac{p_c}{1 - p_c}.
\]

Substituting the privacy constraint \(\frac{p_c}{1 - p_c} \leq e^{\varepsilon_c}\), solving for \( p_c \):
\[
p_c = \frac{e^{\varepsilon_c}}{1 + e^{\varepsilon_c}}.
\]

Similarly, analyzing the probabilities of reporting \( \tilde{Z}_a^t = 1 \):
\[
\Pr[\tilde{Z}_a^t = 1 \mid Z_a^t = 1] = p_a, \quad \Pr[\tilde{Z}_a^t = 1 \mid Z_a^t = 0] = 1 - p_a,
\]
and substituting \(\frac{p_a}{1 - p_a} \leq e^{\varepsilon_a}\), solving for \( p_a \):
\[
p_a = \frac{e^{\varepsilon_a}}{1 + e^{\varepsilon_a}}.
\]

Adding the privacy budgets of \( \tilde{Z}_c^t \) and \( \tilde{Z}_a^t \), the total privacy budget is:
\[
\varepsilon = \varepsilon_c + \varepsilon_a.
\]

Hence, the randomized response mechanism with probabilities \( p_c \) and \( p_a \) satisfies \(\varepsilon\)-differential privacy.

%% file: Appendix/pseudocode.tex
\subsection{Training: Cross-Client Learning of Interdependencies}
\begin{algorithm}[h]
\caption{Training: Federated Learning of Cross-Client Interdependencies}
\label{alg:training}
\begin{algorithmic}[1]
\STATE \textbf{Input:} Local data \( y^{1:T}_m \), functions \( f_m \), \( h_m \) \( \forall m \in \{1, \ldots, M\} \), learning rates \( \eta_1, \eta_2, \eta_3 \).
\STATE \textbf{Output:} Updated parameters \( \theta_s \), \( \{\theta_m\}_{m=1}^M \).

\FOR{$t = 1:T$}
    \FOR{each \textit{client} \( m \) \textbf{in parallel}}
        \STATE Compute augmented states \( {(\hat{x}^t_m)}_a \) and \( {(x^t_m)}_a \).
        \STATE Send \( {(\hat{x}^{t-1}_m)}_c \) and \( {(x^t_m)}_a \) to the server.
    \ENDFOR
    \STATE \textit{Server} updates \( \theta_s \) using \( \nabla_{\theta_s^k} (L_s) \).
    \STATE \textit{Server} sends \( \nabla_{{(x^t_m)}_a} (L_s) \) to each client.
    \FOR{each \textit{client} \( m \) \textbf{in parallel}}
        \STATE Update \( \theta_m \) using \( \nabla_{\theta_m^k} (L_m)_c \) and \( \nabla_{\theta_m^k} (L_s) \).
    \ENDFOR
\ENDFOR
\end{algorithmic}
\end{algorithm}

\subsection{Inference: Root Cause Analysis}
\begin{algorithm}[H]
\caption{Inference: Federated Root Cause Analysis}
\label{alg:inferencing}
\begin{algorithmic}[1]
\STATE \textbf{Input:} Residuals \( r_c^t, r_a^t \), thresholds \( \tau_c, \tau_a \)
\STATE \textbf{Output:} Root cause and propagated effect; causal network
\FOR{each \textit{client }\( m \) in parallel}
    \STATE Compute Mahalanobis distances \( d_{M,c}^2, d_{M,a}^2 \).
    \STATE Generate anomaly flags: \( Z_c^t = \mathbb{1}[d_{M,c}^2 > \tau_c], Z_a^t = \mathbb{1}[d_{M,a}^2 > \tau_a] \) where $\mathbb{1}$ is indicator function.
    \STATE Transmit \((Z_c^t, Z_a^t)\) to the server.
\ENDFOR

\STATE \textit{Server:} Use look-up Table 1 to identify clients with root cause and propagated effects of anomaly
\end{algorithmic}
\end{algorithm}